\icmltitlerunning{Learning with Random Learning Rates}
\definecolor{mygreen}{rgb}{0,0.6,0}
\definecolor{mygray}{rgb}{0.5,0.5,0.5}
\definecolor{mymauve}{rgb}{0.58,0,0.82}
\newcommand{\deq}{\mathrel{\mathop{:}}=}
\begin{document}

\twocolumn[
\icmltitle{Learning with Random Learning Rates}

\icmlsetsymbol{equal}{*}

\begin{icmlauthorlist}
	
  \icmlauthor{L\'eonard Blier}{equal,tau,fair}
  \icmlauthor{Pierre Wolinski}{equal,tau}
  \icmlauthor{Yann Ollivier}{fair}
\end{icmlauthorlist}

\icmlaffiliation{tau}{TAckling the Underspecified, Universit\'e Paris Sud}
\icmlaffiliation{fair}{Facebook Artificial Intelligence Research}
%

\icmlcorrespondingauthor{L\'eonard Blier}{leonardb@fb.com}
\icmlcorrespondingauthor{Pierre Wolinski}{pierre.wolinski@u-psud.fr}

\icmlkeywords{TODO}
\vskip 0.3in
]

\printAffiliationsAndNotice{\icmlEqualContribution}

\begin{abstract}

In neural networks, the learning rate of the gradient descent 
strongly affects performance. This prevents
reliable out-of-the-box training of a model on a new problem. We propose the \emph{All Learning Rates At
Once} (Alrao) algorithm: each unit or feature in the network gets its own
learning rate sampled from a random distribution spanning several orders
of magnitude, in the hope that enough units will get a close-to-optimal
learning rate. Perhaps surprisingly, stochastic gradient descent (SGD)
with Alrao performs close to SGD with an optimally tuned learning rate,
for various network architectures and problems.
In our experiments, all Alrao
runs were able to learn well without any tuning.

\end{abstract}


\section{Introduction}

Deep learning models often require delicate hyperparameter tuning
\citep{zoph2016neural}: when facing new data or new
model architectures, finding a configuration that makes a model learn can
require both expert knowledge and extensive testing. These and other
issues largely prevent deep learning models from working out-of-the-box
on new problems, or on a wide range of problems, without human
intervention (AutoML setup, \citealt{guyon2016brief}).  One of the most
critical hyperparameters is the learning rate of the gradient descent
\citep[p.~892]{theodoridis2015machine}.  With too large learning rates,
the model does not learn; with too small learning rates, optimization is
slow and can lead to local minima and poor generalization
\citep{jastrzkebski2017three,Kurita2018,Mack2016,Surmenok2017}.

Efficient methods with no learning rate tuning
would be one step towards more robust learning algorithms,
ideally working out-of-the-box. Over the years, many works have tried to
directly set optimal per-parameter learning rates, often inspired
from a second-order, arguably asymptotically optimal analysis using the
Hessian matrix \citep{lecun1998efficient}, or the Fisher information matrix
\citep{Amari98} based on squared gradients. The latter are a key
ingredient in the popular Adam \citep{Kingma2015} optimizer.

Popular optimizers like Adam \citep{Kingma2015} come with
default hyperparameters that reach good performance on many problems and
architectures. Yet fine-tuning and scheduling of the Adam learning rate is
still frequently needed \citep{denkowski2017stronger}, and we suspect the default
setting might be somewhat specific to current problems and architecture
sizes.  Indeed we have found Adam with its default hyperparameters to be
somewhat unreliable over a variety of setups. This would make it unfit in an
out-of-the-box scenario if the right hyperparameters cannot be predicted
in advance.

We propose \emph{All Learning Rates At Once} (Alrao), a gradient descent
method for deep learning models. Alrao uses multiple learning rates at
the same time in the same network, spread across several orders of magnitude.
This creates a mixture of slow and fast learning units, with little added computational burden. 

Alrao departs from the usual philosophy of trying to find the ``right''
learning rates; instead we leverage the redundancy of network-based models
to produce a diversity of behaviors from which good network outputs can be
built.
However, ``wasting'' networks units with
unsuited learning rates might be a concern, a priori resulting in fewer
useful units; so we tested Alrao both with or without increasing network
size. Surprisingly, performance was largely satisfying even without increasing
size.

Overall, Alrao's  performance was always close to that of SGD with the
optimal learning rate. Importantly, 
Alrao was found to combine performance with
\emph{robustness}: not a single run failed to learn,
provided a large enough range of admissible learning rates are included.
In contrast, Adam with its default hyperparameters sometimes just fails
to learn at all, and often exhibits instabilities over the course of
learning even when its peak performance is good.

Thus, in our experiments, we will try to focus not just on performance
(which for an SGD algorithm without learning rate tuning, should ideally
be close to that of optimally-tuned SGD), but also on reliability or
robustness, both during the course of optimization and across different
problems and architectures. 




\paragraph{Contributions.}
\begin{itemize}
\item We introduce Alrao, a gradient descent method with
close-to-optimal performance without learning rate tuning. 
Alrao is found to be reliable over a range of problems and
architectures including convolutional networks, LSTMs, or reinforcement
learning.
\item We compare Alrao to the current default optimizer, Adam
with its default hyperparameters. While Adam sometimes outperforms Alrao, 
it is not reliable across the board when varying architectures or during
training.
\end{itemize}


\section{Related Work}
\label{sec:related-works}
Automatically using the ``right'' learning rate for each parameter was
one motivation behind ``adaptive'' methods such as RMSProp \citep{tieleman2012lecture},
AdaGrad \citep{adagrad} or Adam \citep{Kingma2015}.
Adam with its default setting is currently considered the default method in many
works \citep{wilson2017marginal}, and we use it as a baseline. However, further global
adjustement of the Adam learning rate is common \citep{liu2017progressive}.

Other heuristics for setting the learning rate have been proposed, e.g.,
\citep{pesky}; these heuristics usually start with the idea of
approximating a second-order Newton step to define an optimal learning
rate \citep{lecun1998efficient}. Indeed, asymptotically, an arguably optimal
preconditioner is either
the Hessian of the loss (Newton method) or the Fisher information matrix
\citep{Amari98}.

Such methods directly set per-direction learning rates,
equivalent to
preconditioning the gradient descent with a (diagonal or non-diagonal)
matrix.
From this
viewpoint, Alrao just replaces these preconditioners with a random
diagonal matrix whose entries span several orders of magnitude.

Another approach to optimize the learning rate is to perform a gradient
descent on the learning rate itself through the whole training procedure
(for instance \citep{maclaurin2015gradient}). This can be applied online
to avoid backpropagating through multiple training rounds
\citep{masse2015speed}.  This idea has a long history, see, e.g.,
\citep{schraudolph1999local,mahmood2012tuning}.
Some training algorithms depart from gradient descent
altogether, and become learning rate-free, such as
\citep{orabona2017training} using betting strategies to simulate gradient
descent.

The learning rate can also be optimized within the
framework of architecture search, exploring
both the architecture and learning rate
at the same time (e.g., \citep{real2017large}).
The methods range from reinforcement learning
\citep{zoph2016neural,baker2016designing,
li2017hyperband},
evolutionary algorithms (e.g.,
\citep{stanley2002evolving, jozefowicz2015empirical, real2017large}),  Bayesian optimization
\citep{bergstra2013making} or differentiable architecture search
\citep{liu2018darts}.
These methods are resource-intensive and do
not allow for finding a good learning rate in a single run.

\section{Motivation}
\label{sec:motivation}
Alrao was inspired by the intuition that not all units in a neural
network end up being useful.
Hopefully, in a large enough network, a sub-network made
of units with a good learning rate could learn well, and hopefully the units with a wrong
learning rate will just be ignored. (Units with a too large
learning rate may produce large activation values, so this assumes the
model has some form of protection against those, such as BatchNorm or
sigmoid/tanh activations.)

Several lines of work support the idea that not all units of a network are useful or
need to be trained.
First, it is possible to \emph{prune} a
trained network without reducing the performance too much
(e.g., \citealt{lecun1990, Han2015,Han2015a, See}). Second, training only some of the weights in a neural network
while leaving the others at their initial values performs reasonably
well (see experiments in Appendix~\ref{sec:alrao-bernouilli}).
So in Alrao, units with a very small learning rate
should not hinder training.
\cite{Li2018} even show that performance is reasonable if learning
only within a very small-dimensional affine subspace of the parameters,
\emph{chosen in advance at random} rather than post-selected.

Alrao is consistent with the \emph{lottery ticket hypothesis}, which
posits that ``large networks that train successfully contain subnetworks
that---when trained in isolation---converge in a comparable number of
iterations to comparable accuracy'' \citep{Frankle2018}.  This subnetwork
is the \emph{lottery ticket winner}: the one which had the best initial
values. Arguably, given the combinatorial number of subnetworks in a
large network, with high probability one of them is able to learn alone,
and will make the whole network converge.  Viewing the per-feature
learning rates of Alrao as part of the initialization, this hypothesis
suggests there might be enough sub-networks whose initialization leads to
good convergence.

Alrao specifically exploits the network-type
structure of deep learning models, with their potential excess of
parameters compared to more traditional, lower-dimensional optimization.
That Alrao works at all might already be informative about some phenomena
at play in deep neural networks, relying on the overall network
approach of combining a large number of features built for diversity of
behavior.

\section{All Learning Rates At Once: Description}
\label{sec:idea}
\label{sec:our-method}

\begin{figure}[t]
  \centering
  \includegraphics[width=\linewidth]{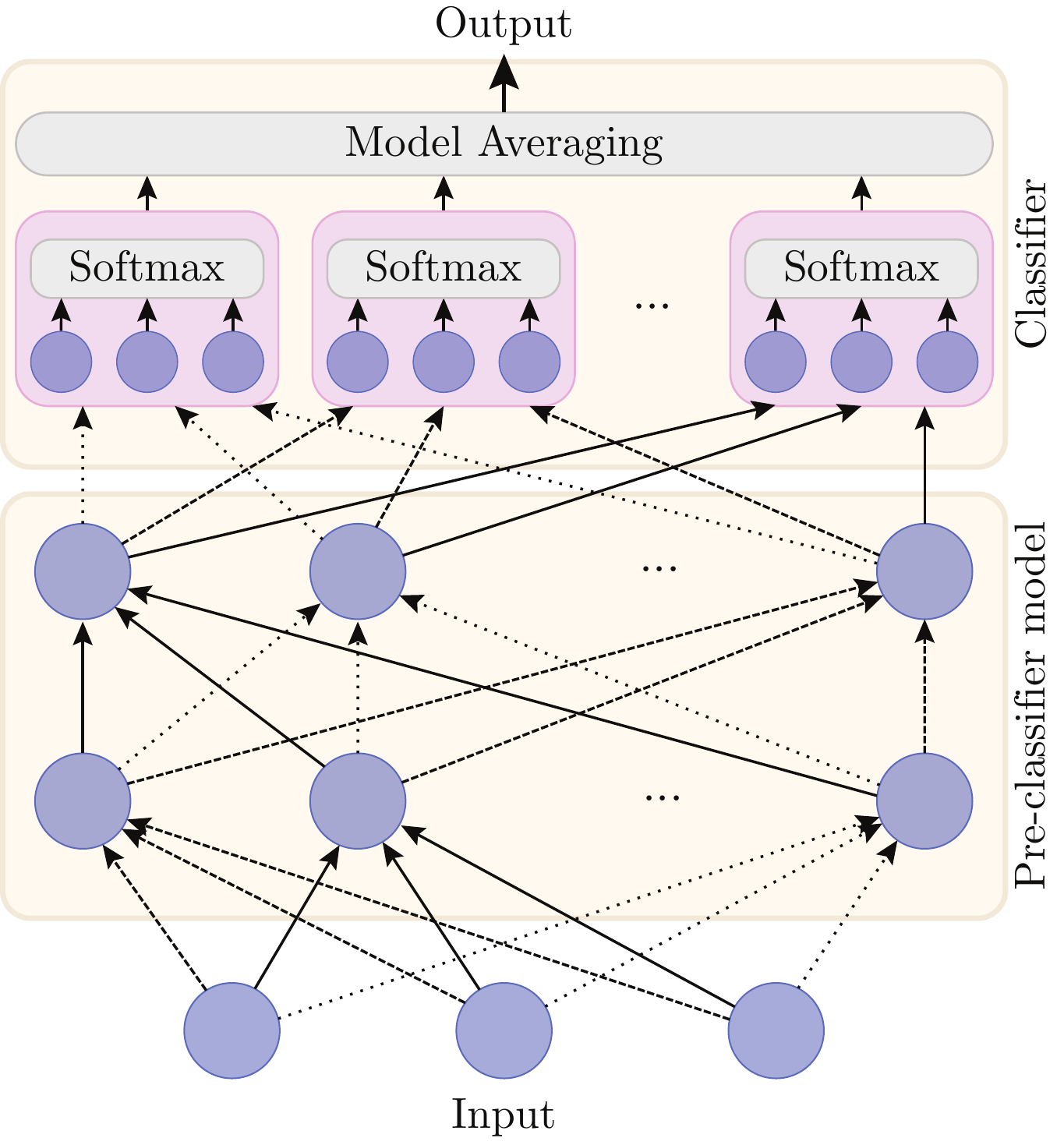}
  \caption{Alrao version of a standard fully connected network for a classification task with three classes.
  The classifier layer is replaced with a set
  of parallel copies of the original classifier, averaged with a model
  averaging method. Each unit uses its own learning rate for its incoming
  weights (represented by the  styles of the arrows).}
\label{fig:archi}
\end{figure}

\paragraph{Principle.} Alrao starts with a standard optimization
method such as SGD, and a range of possible learning rates
$(\eta_{\min}, \eta_{\max})$.
Instead of using a single learning rate, we sample once and for all
one learning rate for each \emph{feature}, randomly sampled log-uniformly
in $(\eta_{\min}, \eta_{\max})$. Then these learning
rates are used in the usual optimization update:
\begin{equation}
  \label{eq:alraoprinciple}
  \theta_{l,i} \leftarrow \theta_{l,i} - \eta_{l,i} \cdot \nabla_{\theta_{l,i}}\ell(\Phi_\theta(x), y)
\end{equation}
where $\theta_{l,i}$ is the set of parameters used to compute the feature
$i$ of layer $l$ from the activations of layer $l-1$ (the \emph{incoming}
weights of feature $i$).
Thus we build ``slow-learning'' and ``fast-learning'' features, in the
hope to get enough features in the ``Goldilocks zone''.

What constitutes a \emph{feature} depends on the type of layers in the
model. For example, in a fully connected layer, each
component of a layer is considered as a feature: all incoming weights of the same unit
share the same learning rate. On the other hand, in a convolutional layer
we consider each convolution filter as constituting a feature: there
is one learning rate per filter (or channel), thus keeping
translation-invariance over the input image. In LSTMs, we apply the same
learning rate to all components in each LSTM unit (thus in
the implementation, the vector of learning rates is the same for input
gates, for forget gates, etc.).

However, the update \eqref{eq:alraoprinciple} cannot be used directly in
the last layer. For instance, for regression there may be only one output
feature. For classification, each feature in the final classification
layer represents a single category,  and so using different learning
rates for these features would favor some categories during learning.
Instead, on the output layer we chose to duplicate the layer using
several learning rate values, and use a (Bayesian) model averaging method
to obtain the overall network output (Fig.~\ref{fig:archi}).
Appendix~\ref{sec:convergence-results}
contains a proof (under convexity assumptions) that this mechanism
works, given the initial layers.

We set a learning rate \emph{per feature}, rather than per
parameter. Otherwise, every feature would have some parameters
with large learning rates, and we would expect even a few large incoming
weights to be able to derail a feature. So having diverging
parameters within a feature is hurtful, while having diverging
features in a layer is not necessarily hurtful since the next layer can
choose to disregard them.  

\paragraph{Definitions and notation.}
\label{sec:notations}
We now describe Alrao more precisely
for deep learning models with softmax output, on
classification tasks (the case of regression
is similar).

Let $\mathcal{D} = \{(x_{1}, y_{1}), ..., (x_{N}, y_{N})\}$, with $y_{i}
\in \{1, ..., K\}$, be a classification dataset.
The goal is to predict the $y_{i}$ given the $x_{i}$, using a deep
learning model $\Phi_{\theta}$. For each input $x$, $\Phi_{\theta}(x)$ is a probability
distribution over $\{1, ..., K\}$, and we want to minimize the categorical cross-entropy loss $\ell$ over the dataset: $\frac{1}{N}\sum_{i}\ell(\Phi_{\theta}(x_{i}), y_{i})$.

A deep learning model for classification $\Phi_{\theta}$ is made of
two parts: a \emph{pre-classifier} $\phi_{\theta^{\mathrm{pc}}}$ which
computes some quantities fed to a
final
\emph{classifier layer} $C_{\theta^{\mathrm{cl}}}$,
namely,
$\Phi_{\theta}(x)=C_{\theta^{\mathrm{cl}}}(\phi_{\theta^{\mathrm{pc}}}(x))$. The
classifier layer $C_{\theta^{\mathrm{cl}}}$ with $K$ categories is
defined by $C_{\theta^{\mathrm{cl}}} = \mathrm{softmax}\circ\left(W^{T}x +
b\right)$ with $\theta^{\mathrm{cl}} = (W, b)$, and $\mathrm{softmax}(x_{1}, ...,
x_{K})_{k} = {e^{x_{k}}}/\left({\sum_{i} e^{x_{i}}}\right).$The
\emph{pre-classifier} is a computational graph composed of any number of \emph{layers},
and each layer is made of multiple \emph{features}.

We denote $\logunif(\cdot ; \eta_{\min}, \eta_{\max})$ the \emph{log-uniform}
probability distribution on an interval $(\eta_{\min}, \eta_{\max})$:
namely, if $\eta \sim \logunif(\cdot ; \eta_{\min},
\eta_{\max})$, then $\log \eta$ is uniformly distributed between $\log
\eta_{\min}$ and $\log \eta_{\max}$.
Its
density function is
\begin{equation}
  \label{eq:logunif}
  \logunif(\eta; \eta_{\min}, \eta_{\max}) = \frac{\mathbbm{1}_{\eta_{\min} \leq \eta \leq \eta_{\max}}}{\log(\eta_{\max}) - \log(\eta_{\min})}\times\frac{1}{\eta}
\end{equation}

\renewcommand{\algorithmiccomment}[1]{// #1}                                       
\begin{algorithm}[t!]
  \caption{Alrao for model $\Phi_{\theta} =
  C_{\theta^{\mathrm{cl}}}\circ\phi_{\theta^{\mathrm{pc}}}$ with
  $N_{\mathrm{cl}}$ classifiers and learning rates in $[\eta_{\min}, \eta_{\max}]$}
  \label{algo:alrao}
  \begin{algorithmic}[1]
    \STATE $a_{j} \leftarrow 1/N_{\mathrm{cl}}$ for each $1\leq j \leq N_{\mathrm{cl}}$ \COMMENT{Initialize the $N_{\mathrm{cl}}$ model averaging weights $a_{j}$}

    \STATE $\Phi^{\text{Alrao}}_{\theta}(x) :=
    \sum_{j=1}^{N_{\mathrm{cl}}}a_{j}\,C_{\theta^{\mathrm{cl}}_{j}}(
    \phi_{\theta^{\mathrm{pc}}}(x))$ \COMMENT{Define the Alrao architecture}
    \FOR{layers $l$, \textbf{for all} feature $i$ in layer $l$}
    \STATE Sample  $\eta_{l,i} \sim \logunif(.; \eta_{\min}, \eta_{\max})$.
    \COMMENT{Sample a learning rate for each feature}
    \ENDFOR
    \FOR{classifiers $j$, $1\leq j \leq N_{\mathrm{cl}}$}
    \STATE Define  $\log \eta_{j} = \log\eta_{\min} + \frac{j-1}{N_{\mathrm{cl}}-1}\log \frac{\eta_{\max}}{\eta_{\min}}$.
    \COMMENT{Set a learning rate for each classifier $j$}
    \ENDFOR

    \WHILE{stopping criterion is false}
        \STATE $z_{t} \leftarrow \phi_{\theta^{\mathrm{pc}}}(x_{t})$
	\COMMENT{Store the pre-classifier output}
        \FOR{layers $l$, \textbf{for all} feature $i$ in layer $l$}
            \STATE $\theta_{l,i} \leftarrow \theta_{l,i} - \eta_{l,i} \cdot \nabla_{\theta_{l,i}}\ell(\Phi^{\text{Alrao}}_{\theta}(x_{t}), y_{t})$ \COMMENT{Update the pre-classifier weights}
        \ENDFOR
        \FOR{Classifier $j$}
            \STATE $\theta^{\mathrm{cl}}_{j} \leftarrow
	    \theta^{\mathrm{cl}}_{j} - \eta_{j}\cdot
	    \nabla_{\theta^{\mathrm{cl}}_{j}} \,
	    \ell(C_{\theta^{\mathrm{cl}}_{j}}(z_{t}), y_{t})$
	    \COMMENT{Update the classifiers' weights}
        \ENDFOR

        \STATE $a \leftarrow \mathtt{ModelAveraging}(a, (C_{\theta^{\mathrm{cl}}_{i}}(z_{t}))_{i}, y_{t})$ \COMMENT{Update the model averaging weights.}
        \STATE $t \leftarrow t+1$ mod $N$
    \ENDWHILE
    \end{algorithmic}
  \end{algorithm}

\paragraph{Alrao for the pre-classifier: A random learning rate
for each feature.}
In the pre-classifier, for each feature $i$ in each layer $l$, a learning
rate $\eta_{l,i}$ is sampled from the probability distribution
$\logunif(.; \eta_{\min}, \eta_{\max})$, once and for all at the beginning of
training.\footnote{With learning rates resampled at each time,
each step would be, in expectation, an ordinary SGD step with
learning rate $\mathbb{E}\eta_{l,i}$, thus just
yielding an ordinary SGD trajectory with more noise.} Then the incoming
parameters of each feature in the preclassifier are updated
in the usual way
with this learning rate (Eq.~\ref{eq:updatepc}).

\paragraph{Alrao for the classifier layer: Model averaging from
classifiers with different learning rates.}
\label{sec:parall-class}
In the classifier layer,
we build multiple clones of the original classifier layer, set a
different learning
rate for each, and then use a model averaging method from among them. The
averaged classifier and the overall Alrao model are:
\begin{align}
  \label{eq:parall-class}
  C^{\text{Alrao}}_{\theta^{\mathrm{cl}}}(z) \deq
  \sum_{j=1}^{N_{\mathrm{cl}}}a_{j} \, C_{\theta^{\mathrm{cl}}_{j}}(z) \\
\Phi^{\text{Alrao}}_{\theta}(x) \deq
C^{\text{Alrao}}_{\theta^{\mathrm{cl}}}(\phi_{\theta^{\mathrm{pc}}}(x))
\end{align}
where the
$C_{\theta^{\mathrm{cl}}_{j}}$ are copies of the original classifier
layer, with
non-tied parameters, and
$\theta^{\mathrm{cl}} \deq (\theta^{\mathrm{cl}}_{1}, ...,
\theta^{\mathrm{cl}}_{N_{\mathrm{cl}}})$. The $a_{j}$ are the parameters of the model
averaging, and are such that for all $j$, $0 \leq a_{j} \leq 1$, and
$\sum_{j}a_{j} = 1$. These are not updated by gradient descent, but via a
model averaging method from the literature (see below).

For each classifier $C_{\theta^{\mathrm{cl}}_{j}}$, we set a learning
rate $\eta_{j}$ defined by
$
\log \eta_{j} = \log \eta_{\min} +
\frac{j-1}{N_{\mathrm{cl}}-1}\log\left(\frac{\eta_{\max}}{\eta_{\min}}\right)
$, 
so that the
classifiers' learning rates are log-uniformly spread on the interval
$[\eta_{\min}, \eta_{\max}]$.

Thus, the original model $\Phi_{\theta}(x)$ leads to the Alrao model $\Phi^{\text{Alrao}}_{\theta}(x)$.
Only the classifier layer is modified, the pre-classifier architecture being unchanged.

\paragraph{Update rule.}

Alg.~\ref{algo:alrao} presents the full Alrao algorithm.
The updates for the pre-classifier, classifier, and model
averaging weights are as follows.
\begin{itemize}
 \item The update rule for the pre-classifier is the usual SGD one, with
per-feature learning rates. For each feature $i$ in each
layer $l$, its incoming parameters are updated as:
  \begin{equation}
    \label{eq:updatepc}
  \theta_{l,i} \leftarrow \theta_{l,i} - \eta_{l,i} \cdot \nabla_{\theta_{l,i}}\ell(\Phi^{\text{Alrao}}_\theta(x), y)
\end{equation}

\item The parameters $\theta^{\mathrm{cl}}_j$ of each classifier clone $j$ on the classifier layer are
updated
as if this classifier alone was the only output of the model:
\begin{align}
    \label{eq:updatec}
  \theta^{\mathrm{cl}}_{j} \leftarrow & \;
  \theta^{\mathrm{cl}}_{j}  - \eta_{j} \cdot
  \nabla_{\theta^{\mathrm{cl}}_{j}}\,\ell(C_{\theta^{\mathrm{cl}}_{j}}(\phi_{\theta^{\mathrm{pc}}}(x)), y)
\end{align}
(still sharing the same pre-classifier $\phi_{\theta^{\mathrm{pc}}}$).
This
ensures classifiers with low
weights $a_j$ still learn, and
is consistent with model averaging philosophy.
Algorithmically this requires differentiating the loss $N_{\mathrm{cl}}$
times with respect to the last layer (but no additional backpropagations
through
the preclassifier).

\item To set the weights $a_j$, several model averaging techniques
are available, such as Bayesian Model
Averaging \citep{Wasserman2000}. We decided to use the
\emph{Switch} model averaging \citep{VanErven2011}, a Bayesian method which is
both simple,
principled and
very responsive to changes in performance of
the various models. After each sample or mini-batch, the switch computes
a modified posterior distribution $(a_j)$ over the classifiers. This computation
is directly taken from \citep{VanErven2011} and explained in
Appendix~\ref{sec:switch}. The observed evolution of this posterior
during training is commented upon in Appendix~\ref{sec:posterior}.
\end{itemize}

\begin{table*}[t!]
   \caption{
Performance of Alrao, of SGD with optimal learning rate from
$\{10^{-5}, 10^{-4}, 10^{-3}, 10^{-2}, 10^{-1}, 1., 10.\}$, and of Adam
with its default setting. Three convolutional models are reported for image
classification on CIFAR10, three others for ImageNet, one recurrent model for character prediction (Penn
Treebank), and two experiments on RL problems. The Alrao learning rates have been taken in wide a priori
reasonable intervals, $[\eta_{\min};\eta_{\max}] = [10^{-5};10]$ for CNNs
(CIFAR10 and ImageNet) and RL, and $[10^{-3};10^2]$ for RNNs (PTB).
Each experiment is
run 10 times (CIFAR10 and RL), 5 times (PTB) or 1 time (ImageNet); the confidence intervals
report the standard
deviation over these runs. %
     }
  \begin{center}
  \fontsize{9pt}{9pt}\selectfont
  \begin{sc}
  \begin{tabular}[h]{lcccccccc}
    \toprule
    Model              & \multicolumn{3}{c}{SGD with optimal LR} & \multicolumn{2}{c}{Adam - Default} & \multicolumn{2}{c}{Alrao}                                                                      \\
    \cmidrule(lr){2-4} \cmidrule(lr){5-6} \cmidrule(lr){7-8}
    {}                 & LR                                      & Loss                               & Top1 (\%)      & Loss             & Top1  (\%)     & Loss              & Top1  (\%)            \\
    \midrule
    \multicolumn{1}{c}{\emph{CIFAR10}}                                                                                                                                                                 \\
    MobileNet          & $1e$-1                                  & $0.37 \pm 0.01$                    & $90.2 \pm 0.3$ & $1.01 \pm 0.95$  & $78 \pm 11$    & $0.42 \pm 0.02$   & $88.1 \pm 0.6$        \\
    MobileNet, width*3 &  -                                       &   -                                 &    -            & $0.32 \pm 0.02$  & $90.8 \pm 0.4$ & $0.35 \pm 0.01$   & $89.0 \pm 0.6$ \\
    GoogLeNet          & $1e$-2                                  & $0.45 \pm 0.05$                    & $89.6 \pm 1.0$ & $0.47 \pm 0.04$  & $89.8 \pm 0.4$ & $0.47 \pm 0.03$   & $88.9 \pm 0.8$        \\
    GoogLeNet, width*3 &  -                                       &   -                                 &    -            & $0.41 \pm 0.02$  & $88.6 \pm 0.6$ & $0.37 \pm 0.01$   & $89.8 \pm 0.8$ \\
    VGG19              & $1e$-1                                  & $0.42 \pm 0.02$                    & $89.5 \pm 0.2$ & $0.43 \pm 0.02 $ & $88.9 \pm 0.4$ & $0.45 \pm 0.03$   & $87.5 \pm 0.4$        \\
    VGG19, width*3     &  -                                       &     -                               &   -             & $0.37 \pm 0.01$  & $89.5 \pm 0.8$ & $0.381 \pm 0.004$ & $88.4 \pm 0.7$ \\

    \midrule
    \multicolumn{1}{c}{\emph{ImageNet}}                                                                                                                                                                         \\
    AlexNet                                           & $1e$-2 & $2.15$                             & $53.2$                            & $6.91$            & $0.10 $        & $2.56$          & $43.2$         \\
    Densenet121                                       & $1$    & $1.35 $                            & $69.7 $                           & $1.39 $           & $67.9 $        & $1.41 $         & $67.3  $       \\
    ResNet50                                          & $1$    & $1.49$                             & $67.4 $                           & $1.39 $           & $67.1 $        & $1.42$          & $67.5$         \\
    ResNet50, width*3                                 & -     & -                                 & -                               & $1.99$            & $60.8$         & $1.33$          & $70.9$         \\
    \midrule
    \multicolumn{1}{c}{\emph{Penn Treebank}}                                                                                                                                                                    \\
    LSTM                                              & $1$    & $1.566 \pm 0.003$                  & $66.1 \pm 0.1$                    & $1.587 \pm 0.005$ & $65.6 \pm 0.1$ & $1.67 \pm 0.01$ & $64.1 \pm 0.2$ \\
    \midrule
    \multicolumn{1}{c}{\emph{Reinforcement Learning}} &        & \multicolumn{2}{c}{Return}         & \multicolumn{2}{c}{Return}        & \multicolumn{2}{c}{Return}                                            \\
    Pendulum                                          & $1e-4$ & \multicolumn{2}{c}{$-372 \pm 24$}  & \multicolumn{2}{c}{$-414 \pm 64$} & \multicolumn{2}{c}{$-371 \pm 36$}                                     \\
    LunarLander                                       & $1e-1$ & \multicolumn{2}{c}{$188 \pm  23$ } & \multicolumn{2}{c}{$155 \pm 23$}  & \multicolumn{2}{c}{$186 \pm 45$}                                      \\
    \bottomrule
  \end{tabular}
  \end{sc}
  \end{center}

  \vskip -0.1in
  \label{tab:results}
\end{table*}

\paragraph{Implementation.} \label{sec:implementation} We release along
with this paper a Pytorch \citep{paszke2017automatic} implementation of
this method. It can be used on an existing model with little
modification. A short tutorial is given in Appendix~\ref{sec:tutorial}.
\emph{Features} (sets of weights sharing the same learning rate) need
to be specified for each layer type: for now this has been done for linear,
convolutional, and LSTMs layers.

\section{Experimental Setup}
\label{sec:experiments}

We tested Alrao on various convolutional networks for image
classification (Imagenet and CIFAR10), on LSTMs for text prediction, and on Reinforcement Learning problems. The
baselines are SGD with an optimal learning rate, and Adam with its default
setting, arguably the current default method \citep{wilson2017marginal}.

\paragraph{Image classification on ImageNet and CIFAR10.}
\label{sec:cifar10}
For image classification, we used the
ImageNet \citep{imagenet_cvpr09} and CIFAR10 \citep{Krizhevsky2009}
datasets.
The ImageNet dataset is made of 1,283,166 training and 60,000 testing
data; we split the training set into a smaller training set and a
validation set with 60,000 samples. We do the same on CIFAR10: the 50,000
training samples are split into 40,000 training samples and 10,000 validation samples.

For each architecture, training on the smaller training set was
stopped when the validation loss had not improved for 20 epochs. The
epoch with best validation loss was selected and the corresponding model tested on the
test set.
The inputs are normalized, and training used data augmentation: random
cropping and random horizontal flipping (see
Appendix~\ref{sec:preprocess} for details).
For CIFAR10, each setting was run 10 times: the confidence intervals presented are the
standard deviation over these runs. For ImageNet, because of high computation time, we performed only a single run per experiment.

We tested  Alrao on
several standard architectures for these
tasks. On ImageNet, we tested Resnet50~\citep{he2016deep},
Densenet121~\citep{huang2017densely} and
Alexnet~\citep{krizhevsky2014one}, with the default Pytorch implementation. On CIFAR10, we tested
GoogLeNet~\citep{szegedy2015going}, VGG19~\citep{Simonyan14c} and
MobileNet \citep{howard2017mobilenets} implemented by~\citep{kiangliu}. 

The Alrao learning rates were sampled log-uniformly from
$\eta_{\min} = 10^{-5}$ to $\eta_{\max} = 10$.
For the output layer we used 10 classifiers with switch model averaging
(Appendix~\ref{sec:switch}); the learning rates of the output classifiers
are deterministic and log-uniformly spread in
$[\eta_{\min},\eta_{\max}]$.

In addition, each model was trained with SGD for every learning rate in the set
$\{10^{-5}, 10^{-4}, 10^{-3}, 10^{-2}, 10^{-1}, 1., 10.\}$. The best SGD
learning rate is selected on the validation set, then reported
in Table~\ref{tab:results}. We also compare to Adam with its default hyperparameters ($\eta=10^{-3}, \beta_1 = 0.9, \beta_2 = 0.999$).

Finally, since Alrao may waste units with unsuitable learning rates, we
also tested architectures with increased width ($3$ times as many units)
with Alrao and
Adam on ImageNet and CIFAR10. On these larger models, 
systematic SGD learning rate grid search was not performed due to the time required.

The results are presented in Table~\ref{tab:results}.
Learning curves with various SGD learning rates, Adam and
Alrao are presented in Fig.~\ref{fig:firstepochs}.
Fig.~\ref{fig:trig} tests the influence of $\eta_{\min}$ and
$\eta_{\max}$.

\begin{figure*}[t]
  \centering
    \subfloat[Resnet50 on ImageNet\label{fig:firstepochs-resnet}.]{
      \includegraphics[width=0.95\linewidth]{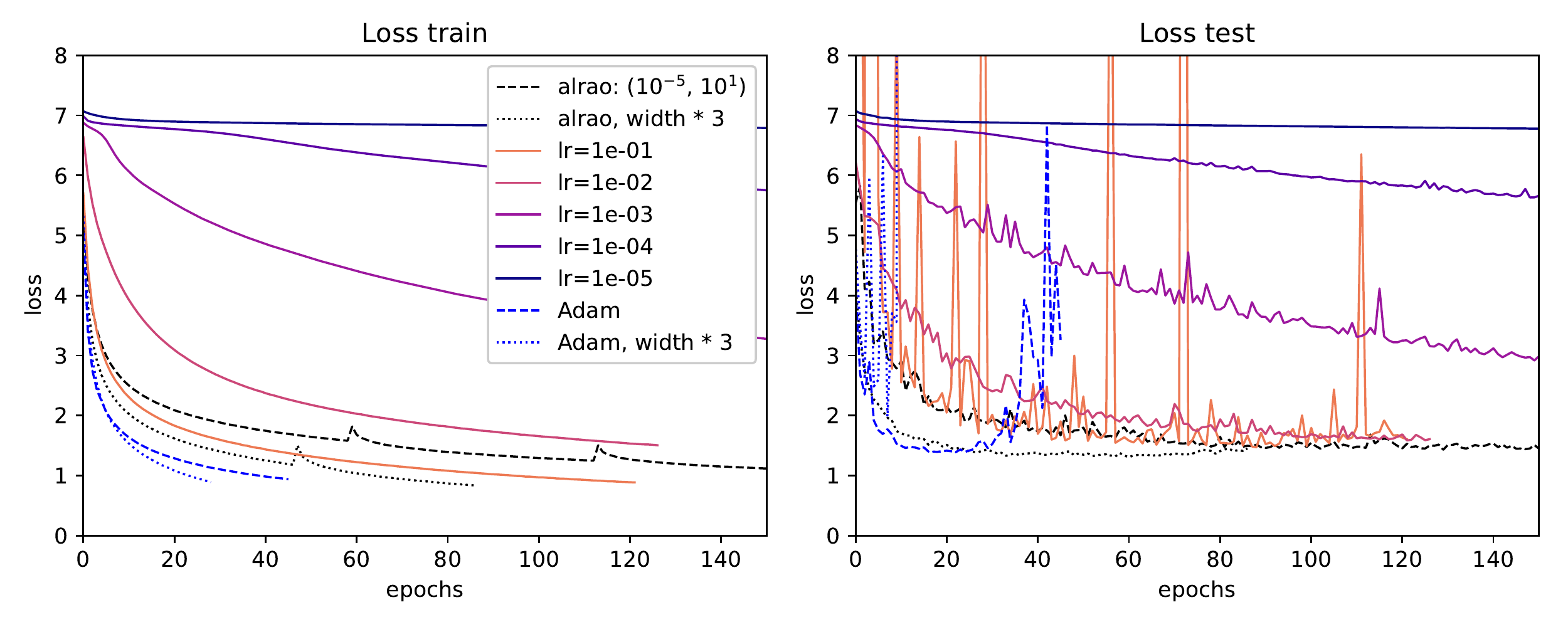}
    }
    \vfill
    \subfloat[AlexNet on ImageNet\label{fig:firstepochs-alexnet}]{
        \includegraphics[width=0.95\linewidth]{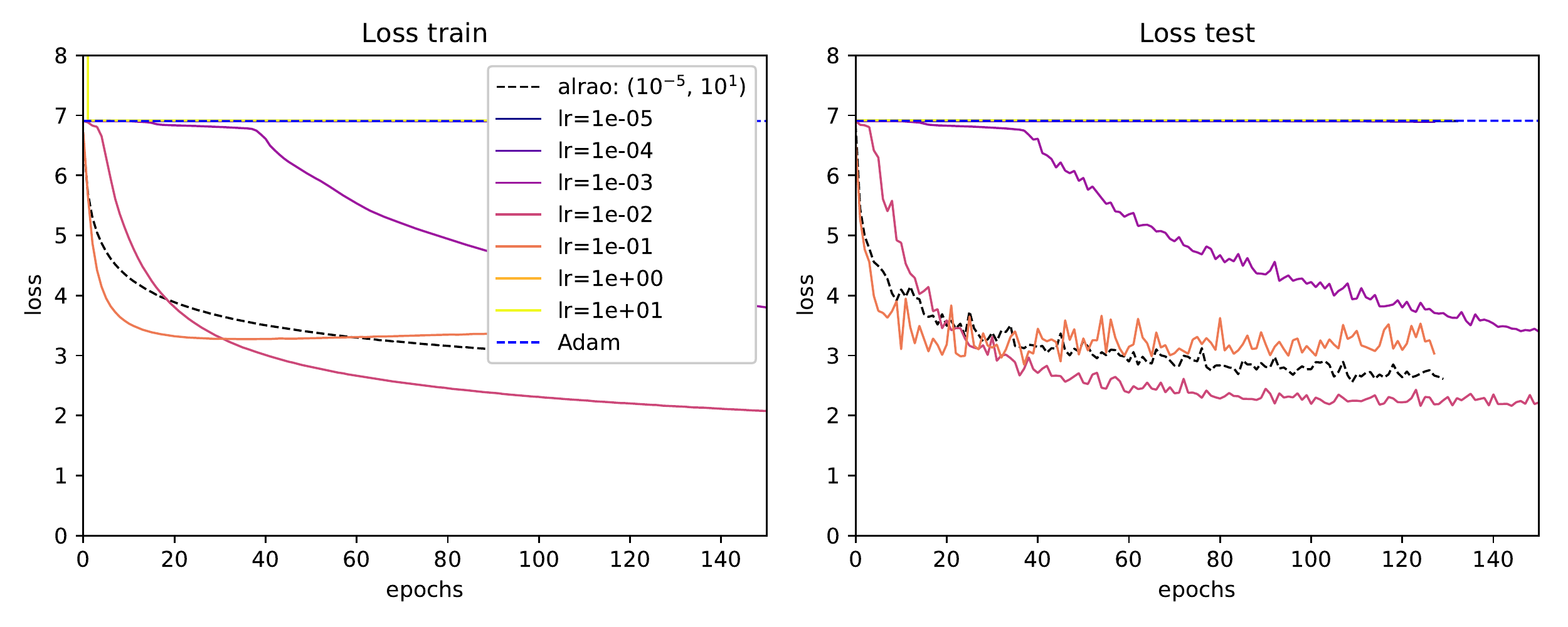}
      }
  \caption{
Learning curves for SGD
with various learning rates, Alrao, and Adam with its default
setting, with the Resnet50 architecture on ImageNet and the MobileNetV2 architecture on CIFAR10. Left: training loss; right: test
loss. While Alrao uses learning rates from the entire
range, its performance is comparable to the
optimal learning rate.}
   \label{fig:firstepochs}
\end{figure*}

\paragraph{Recurrent learning on Penn Treebank.}
\label{sec:penn-tree-bank}
To test Alrao on a different kind of architecture,
we used a recurrent neural network for text prediction on the Penn
Treebank \citep{Marcus1993} dataset.
The experimental procedure is the same, with $(\eta_{\min}, \eta_{\max}) =
(0.001, 100)$ and $6$ output classifiers for Alrao. The results appear in
Table~\ref{tab:results}, where the loss is given in bits per character and the accuracy is the proportion of correct character predictions.

The model was trained for \emph{character} prediction rather than word
prediction. This is technically easier for Alrao implementation: since Alrao uses copies
of the output layer, memory issues arise for models with most parameters on
the output layer. Word prediction (10,000 classes on PTB) requires more output
parameters than character prediction; see Section~\ref{sec:discussion}
and Appendix~\ref{sec:number-parameters}.

The model is a two-layer LSTM \citep{hochreiter1997long} with an embedding size of 100 and
100 hidden features. A dropout layer with rate $0.2$ is included before
the decoder. The training set is divided into 20 minibatchs.
Gradients are computed via truncated backprop through time \citep{werbos1990backpropagation} with truncation every 70 characters.

\paragraph{Reinforcement Learning.}
We tested Alrao on two standard Reinforcement Learning problems: the Pendulum and Lunar Lander environments from OpenAI Gym~\cite{gym}. We use standard Deep Q-learning~\cite{dqn}. The $Q$-network is a standard MLP with 2 hidden layers. The experimental setting to compare Alrao, Adam and SGD is the same as above, with $\eta_{\min} = 10^{-5}$ to $\eta_{\max} = 10$. Alrao uses 10 output layers (which are not classifiers in that case but regressors). More details on the Q-learning implementation are given in Appendix~\ref{sec:preprocess}.
For each environment, we selected the best epoch on evaluation runs, and then reported the return of the selected model on new runs in that environment.

\section{Performance and Robustness of Alrao}

\paragraph{Performance of Alrao compared to SGD with the optimal learning rate.}
As expected, Alrao usually performs slightly worse than the best learning
rate with SGD.

Still, even with wide intervals $(\eta_\min, \eta_\max)$, Alrao comes
reasonably close to the best learning rate, across every setup. Notably,
this occurs even though SGD achieves good performance only for a few
learning rates within the interval $(\eta_\min, \eta_\max)$. With our
setting for image classification and RL ($\eta_\min = 10^{-5}$ and
$\eta_\max = 10$), among the $7$ learning rates used with SGD (${10^{-5},
10^{-4}, 10^{-3}, 10^{-2}, 10^{-1}, 1, 10}$), only 3 are able to learn
with AlexNet (and only one is better than Alrao, see Fig.
\ref{fig:firstepochs-alexnet}), only 3 are able to learn with ResNet50
(and only two of them achieve performance similar to Alrao, see Fig.~\ref{fig:firstepochs-resnet}), and only 2 are able to learn on the
pendulum environment (and only one of them converges as fast as Alrao,
Fig. \ref{fig:rl-learningcurves} in Appendix \ref{sec:preprocess}). More
examples are in Appendix \ref{sec:preprocess}. It is surprising that
Alrao manages to learn even though most of the units in the network would
have learning rates unsuited for SGD.

\paragraph{Robustness of Alrao compared to default Adam.} Overall, Alrao learns
reliably in every setup. Performance is close to optimal SGD in all cases
(Table~\ref{tab:results}) with a somewhat larger gap in one case (AlexNet
on ImageNet). This observation is
quite stable over the course of learning, with Alrao curves shadowing
optimal SGD curves over time (Fig.~\ref{fig:firstepochs}).

Often, Adam with its default parameters almost matches optimal SGD, but
this is not always the case.  Over the 13 setups in
Table~\ref{tab:results}, default Adam gives a significantly
poor performance in three cases.  One of those is a pure optimization
issue: with AlexNet on ImageNet, optimization does not start
(Fig.~\ref{fig:firstepochs-alexnet}) with default parameters.
The other two cases are due to strong overfit despite good train
performance: MobileNet on CIFAR (Fig. \ref{fig:firstepochs-mobilenet} in Appendix~\ref{sec:preprocess}) and ResNet with increased width on ImageNet (Fig. \ref{fig:firstepochs-resnet}).

In two further cases, Adam achieves good validation performance but overfits
shortly thereafter: ResNet (Fig.~\ref{fig:firstepochs-resnet})and
DenseNet (Fig.~\ref{fig:firstepochs-densenet} in Appendix~\ref{sec:preprocess}), both on ImageNet).  On the whole, this confirms a known risk of overfit with Adam
\citep{wilson2017marginal}.

Overall, default Adam tends to give slightly better results than Alrao when it
works, but does not learn reliably with its default hyperparameters. 
It can exhibit two kinds of lack of robustness:
optimization failure, and overfit or non-robustness over the course of
learning.  
On
the other hand, every single run of Alrao reached reasonably
close-to-optimal performance.
Alrao also exhibits a steady performance over the
course of learning (Fig.~\ref{fig:firstepochs}).



%


\section{Limitations and Perspectives}
\label{sec:discussion}
\label{sec:strengths-weaknesses}

\begin{figure*}[!h]
  \centering
  \includegraphics[width=\textwidth]{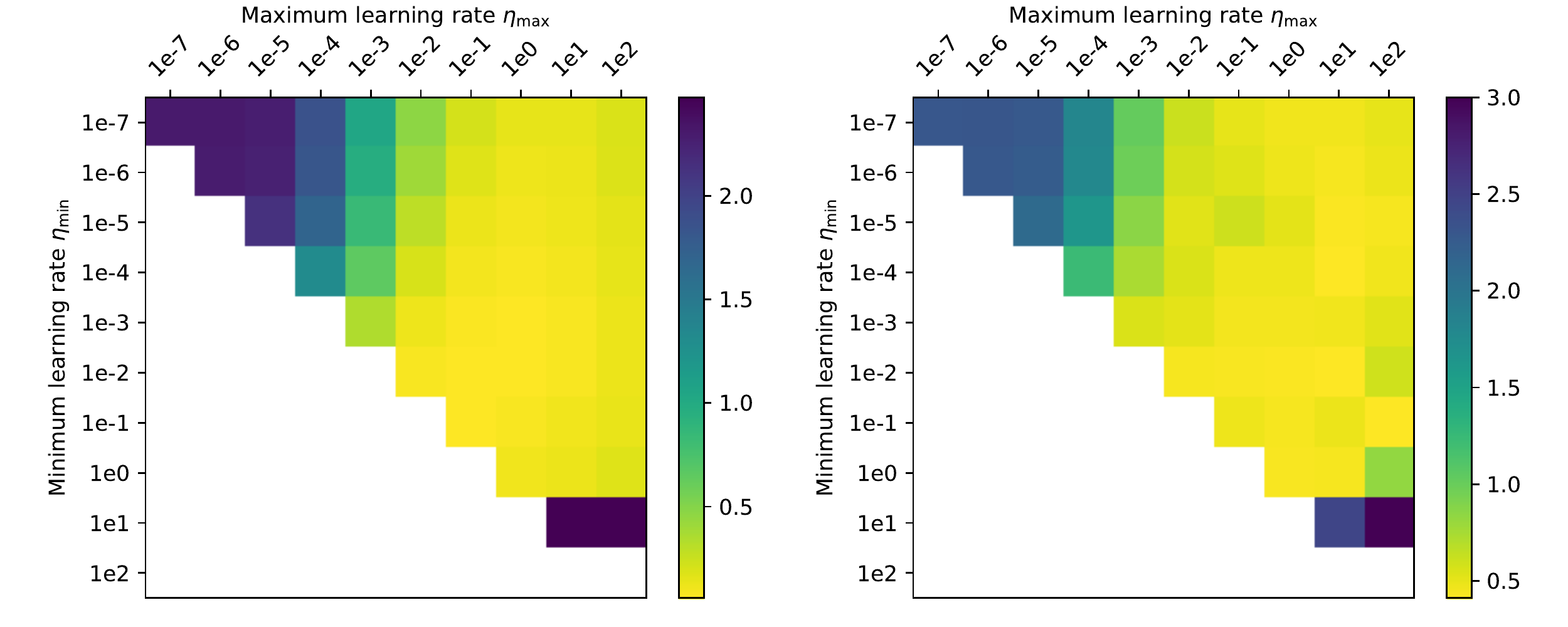}
  \caption{Performance of Alrao with a GoogLeNet model on CIFAR10, depending on the interval
  $(\eta_\min, \eta_\max)$. Left: loss on the train set; right: on the
  test set. Each point with coordinates $(\eta_\min, \eta_\max)$ above
  the diagonal represents the loss after 30 epochs for Alrao with this
  interval. Points $(\eta, \eta)$ on the diagonal represent
  standard SGD
  with learning rate $\eta$ after 50 epochs. Standard SGD with $\eta = 10^2$ is
  left blank to due numerical divergence (NaN). Alrao works as soon as
  $(\eta_\min, \eta_\max)$ contains at least one suitable learning
  rate.}
  \label{fig:trig}
\end{figure*}

\paragraph{Increased number of parameters for the classification layer.}
Alrao modifies the output layer of the optimized model. The number of
parameters for the classification layer is multiplied by the number of
classifier copies used (the number of parameters in the pre-classifier is
unchanged).  On CIFAR10 (10 classes), the number of parameters increased
by less than 5\% for the models used. On ImageNet (1000 classes), it
increases by 50--100\% depending on the architecture. On Penn Treebank, the number of
parameters increased by $15\%$ in our setup (working at the character
level); working at word level it would have increased threefold
(Appendix~\ref{sec:number-parameters}). 
Still, models with a very large number of output classes usually
rely on other parameterizations than a direct softmax, such as a
hierarchical softmax (see
references in \citep{jozefowicz2016exploring});
Alrao can be used in conjunction with such methods.


This would clearly be a limitation of Alrao for models with most parameters in the
classifier layer and without existing methods to streamline this layer.  For
such models, this could be mitigated by handling the copies
of the classifiers on distinct computing units: in Alrao these copies
work in parallel given the pre-classifier.

\paragraph{Adding two hyperparameters.}
We claim to remove a hyperparameter, the learning rate, but
replace it with two hyperparameters $\eta_{\min}$ and $\eta_{\max}$.
Formally, this is true. But a systematic study of the impact of these two
hyperparameters (Fig.~\ref{fig:trig}) shows that the sensitivity to
$\eta_{\min}$ and $\eta_{\max}$ is much lower than the original sensitivity
to the learning rate. In our experiments, convergence happens as soon as
$(\eta_{\min};\eta_{\max})$ contains a reasonable learning rate
(Fig.~\ref{fig:trig}).

A wide range of values of $(\eta_{\min};\eta_{\max})$
will contain one good learning rate and achieve
close-to-optimal performance (Fig.~\ref{fig:trig}).
Typically, we recommend to just use an interval containing
all the learning rates that would have been tested in a grid search,
e.g., $10^{-5}$ to $10$.

So, even if the choice of $\eta_{\min}$ and $\eta_{\max}$ is important, the
results are much more stable to varying these two hyperparameters than
to the original learning rate.
For instance, standard SGD fails due to numerical issues for $\eta =
100$ while Alrao with $\eta_\max = 100$ works with any $\eta_\min\leq 1$
(Fig.~\ref{fig:trig}), and is thus stable to relatively large learning
rates.
We would still expect numerical issues 
with very large $\eta_\max$, but this has not been observed in our experiments.

\label{sec:remarks}
\label{sec:future-directions}

\paragraph{Increasing network size.}
With Alrao, neurons with unsuitable learning rates will not learn:
those with a too large learning rate might learn nothing, while
those with too small learning rates will learn too slowly to be used.
Thus, Alrao may reduce the \emph{effective
size} of the network to only a fraction of the actual architecture size, depending on $(\eta_{\min}, \eta_{\max})$.
Our first intuition was that increasing the width of the network was
going to be necessary with Alrao, to avoid wasting too many units.

Tests of Alrao with increased width are reported in
Table~\ref{tab:results} and  Fig. \ref{fig:firstepochs-resnet}.
Incidentally, these tests show that width was a limiting factor of the
models used for both Alrao and SGD. 
Still, to our surprise, Alrao worked
well even without width augmentation.

\paragraph{Other optimizers, other hyperparameters, learning rate schedulers...}
Using a learning rate schedule instead of a fixed learning rate is
often effective \citep{bengio2012practical}. We did not use learning rate
schedulers here; this may partially explain why the results in Table~\ref{tab:results} are worse than the state-of-the-art.
Nothing prevents using such a scheduler within Alrao, e.g., by dividing
all Alrao learning rates by a time-dependent constant; we did not
experiment with this yet. One
might have hoped that Alrao would match good stepsize schedules thanks to
the diversity of learning rates, but our results do not currently support
this. 

The Alrao idea can also be used with other optimizers than SGD, such as
Adam. We tested combining Alrao and Adam, and found the combination
less reliable than standard Alrao (Appendix~\ref{sec:alrao-adam},
Fig.~\ref{fig:adam-alrao}).  This occurs mostly for
test performance,
while training curves mostly look good (Fig.~\ref{fig:adam-alrao}).
The stark train/test
discrepancy suggests that Alrao combined with Adam may perform well as a pure
optimization method but exacerbates the underlying risk of overfit of
Adam \citep{wilson2017marginal,keskar2017improving}.

The Alrao idea could be used on other hyperparameters as
well, such as momentum. However, if more hyperparameters are initialized
randomly for each feature, the fraction of features having all their hyperparameters in
the ``Goldilocks zone'' will quickly decrease.

\section{Conclusion}
\label{sec:conclusion}

Applying stochastic gradient descent with multiple learning rates for
different features is surprisingly resilient in our experiments, and
provides performance close enough to SGD with an optimal learning rate,
as soon as the range of random learning rates contains a suitable one.
The same resilience is not observed with default Adam.
Alrao could save time when testing deep learning models, opening the door
to more out-of-the-box uses of deep learning.

\section*{Acknowledgments}
We would like to thank Corentin Tallec for his technical help, and his many remarks and advice. We thank Olivier Teytaud for pointing useful references, and Guillaume Charpiat and L\'eon Bottou for their remarks.

\bibliographystyle{plainnat}

\bibliography{biblio}

\begin{thebibliography}{60}
\providecommand{\natexlab}[1]{#1}
\providecommand{\url}[1]{\texttt{#1}}
\expandafter\ifx\csname urlstyle\endcsname\relax
  \providecommand{\doi}[1]{doi: #1}\else
  \providecommand{\doi}{doi: \begingroup \urlstyle{rm}\Url}\fi

\bibitem[Amari(1998)]{Amari98}
Shun-ichi Amari.
\newblock Natural gradient works efficiently in learning.
\newblock \emph{Neural Comput.}, 10:\penalty0 251--276, February 1998.
\newblock ISSN 0899-7667.
\newblock \doi{10.1162/089976698300017746}.

\bibitem[Baker et~al.(2016)Baker, Gupta, Naik, and Raskar]{baker2016designing}
Bowen Baker, Otkrist Gupta, Nikhil Naik, and Ramesh Raskar.
\newblock Designing neural network architectures using reinforcement learning.
\newblock \emph{arXiv preprint arXiv:1611.02167}, 2016.

\bibitem[Bengio(2012)]{bengio2012practical}
Yoshua Bengio.
\newblock Practical recommendations for gradient-based training of deep
  architectures.
\newblock In \emph{Neural networks: Tricks of the trade}, pages 437--478.
  Springer, 2012.

\bibitem[Bergstra et~al.(2013)Bergstra, Yamins, and Cox]{bergstra2013making}
James Bergstra, Daniel Yamins, and David~Daniel Cox.
\newblock Making a science of model search: Hyperparameter optimization in
  hundreds of dimensions for vision architectures.
\newblock 2013.

\bibitem[Brockman et~al.(2016)Brockman, Cheung, Pettersson, Schneider,
  Schulman, Tang, and Zaremba]{gym}
Greg Brockman, Vicki Cheung, Ludwig Pettersson, Jonas Schneider, John Schulman,
  Jie Tang, and Wojciech Zaremba.
\newblock Openai gym, 2016.

\bibitem[Deng et~al.(2009)Deng, Dong, Socher, Li, Li, and
  Fei-Fei]{imagenet_cvpr09}
J.~Deng, W.~Dong, R.~Socher, L.-J. Li, K.~Li, and L.~Fei-Fei.
\newblock {ImageNet: A Large-Scale Hierarchical Image Database}.
\newblock In \emph{CVPR09}, 2009.

\bibitem[Denkowski and Neubig(2017)]{denkowski2017stronger}
Michael Denkowski and Graham Neubig.
\newblock Stronger baselines for trustable results in neural machine
  translation.
\newblock \emph{arXiv preprint arXiv:1706.09733}, 2017.

\bibitem[Duchi et~al.(2011)Duchi, Hazan, and Singer]{adagrad}
John Duchi, Elad Hazan, and Yoram Singer.
\newblock Adaptive subgradient methods for online learning and stochastic
  optimization.
\newblock \emph{The Journal of Machine Learning Research}, 12:\penalty0
  2121--2159, 2011.

\bibitem[Frankle and Carbin(2018)]{Frankle2018}
Jonathan Frankle and Michael Carbin.
\newblock {The Lottery Ticket Hypothesis: Finding Small, Trainable Neural
  Networks}.
\newblock \emph{arXiv preprint arXiv:1704.04861}, mar 2018.

\bibitem[Guyon et~al.(2016)Guyon, Chaabane, Escalante, Escalera, Jajetic,
  Lloyd, Maci{\`a}, Ray, Romaszko, Sebag, et~al.]{guyon2016brief}
Isabelle Guyon, Imad Chaabane, Hugo~Jair Escalante, Sergio Escalera, Damir
  Jajetic, James~Robert Lloyd, N{\'u}ria Maci{\`a}, Bisakha Ray, Lukasz
  Romaszko, Mich{\`e}le Sebag, et~al.
\newblock A brief review of the {ChaLearn} {AutoML} challenge: any-time
  any-dataset learning without human intervention.
\newblock In \emph{Workshop on Automatic Machine Learning}, pages 21--30, 2016.

\bibitem[Han et~al.(2015{\natexlab{a}})Han, Mao, and Dally]{Han2015}
Song Han, Huizi Mao, and William~J. Dally.
\newblock {Deep Compression: Compressing Deep Neural Networks with Pruning,
  Trained Quantization and Huffman Coding}.
\newblock \emph{arXiv preprint arXiv:1510.00149}, 2015{\natexlab{a}}.

\bibitem[Han et~al.(2015{\natexlab{b}})Han, Pool, Tran, and Dally]{Han2015a}
Song Han, Jeff Pool, John Tran, and William~J. Dally.
\newblock {Learning both Weights and Connections for Efficient Neural
  Networks}.
\newblock In \emph{Advances in Neural Information Processing Systems},
  2015{\natexlab{b}}.

\bibitem[He et~al.(2016)He, Zhang, Ren, and Sun]{he2016deep}
Kaiming He, Xiangyu Zhang, Shaoqing Ren, and Jian Sun.
\newblock Deep residual learning for image recognition.
\newblock In \emph{Proceedings of the IEEE conference on computer vision and
  pattern recognition}, pages 770--778, 2016.

\bibitem[Herbster and Warmuth(1998)]{herbster1998tracking}
Mark Herbster and Manfred~K Warmuth.
\newblock Tracking the best expert.
\newblock \emph{Machine learning}, 32\penalty0 (2):\penalty0 151--178, 1998.

\bibitem[Hochreiter and Schmidhuber(1997)]{hochreiter1997long}
Sepp Hochreiter and J{\"u}rgen Schmidhuber.
\newblock Long short-term memory.
\newblock \emph{Neural computation}, 9\penalty0 (8):\penalty0 1735--1780, 1997.

\bibitem[Howard et~al.(2017)Howard, Zhu, Chen, Kalenichenko, Wang, Weyand,
  Andreetto, and Adam]{howard2017mobilenets}
Andrew~G Howard, Menglong Zhu, Bo~Chen, Dmitry Kalenichenko, Weijun Wang,
  Tobias Weyand, Marco Andreetto, and Hartwig Adam.
\newblock Mobilenets: Efficient convolutional neural networks for mobile vision
  applications.
\newblock \emph{arXiv preprint arXiv:1704.04861}, 2017.

\bibitem[Huang et~al.(2017)Huang, Liu, Van Der~Maaten, and
  Weinberger]{huang2017densely}
Gao Huang, Zhuang Liu, Laurens Van Der~Maaten, and Kilian~Q Weinberger.
\newblock Densely connected convolutional networks.
\newblock In \emph{CVPR}, volume~1, page~3, 2017.

\bibitem[Jastrzebski et~al.(2017)Jastrzebski, Kenton, Arpit, Ballas, Fischer,
  Bengio, and Storkey]{jastrzkebski2017three}
Stanislaw Jastrzebski, Zachary Kenton, Devansh Arpit, Nicolas Ballas, Asja
  Fischer, Yoshua Bengio, and Amos Storkey.
\newblock Three factors influencing minima in sgd.
\newblock \emph{arXiv preprint arXiv:1711.04623}, 2017.

\bibitem[Jozefowicz et~al.(2015)Jozefowicz, Zaremba, and
  Sutskever]{jozefowicz2015empirical}
Rafal Jozefowicz, Wojciech Zaremba, and Ilya Sutskever.
\newblock An empirical exploration of recurrent network architectures.
\newblock In \emph{International Conference on Machine Learning}, pages
  2342--2350, 2015.

\bibitem[Jozefowicz et~al.(2016)Jozefowicz, Vinyals, Schuster, Shazeer, and
  Wu]{jozefowicz2016exploring}
Rafal Jozefowicz, Oriol Vinyals, Mike Schuster, Noam Shazeer, and Yonghui Wu.
\newblock Exploring the limits of language modeling.
\newblock \emph{arXiv preprint arXiv:1602.02410}, 2016.

\bibitem[Keskar and Socher(2017)]{keskar2017improving}
Nitish~Shirish Keskar and Richard Socher.
\newblock Improving generalization performance by switching from {A}dam to
  {SGD}.
\newblock \emph{arXiv preprint arXiv:1712.07628}, 2017.

\bibitem[Kianglu(2018)]{kiangliu}
Kianglu.
\newblock pytorch-cifar, 2018.
\newblock URL \url{https://github.com/kuangliu/pytorch-cifar}.

\bibitem[Kingma and Ba(2015)]{Kingma2015}
Diederik~P. Kingma and Jimmy Ba.
\newblock {Adam: A Method for Stochastic Optimization}.
\newblock In \emph{International Conference on Learning Representations}, 2015.

\bibitem[Koolen and De~Rooij(2008)]{koolen2008combining}
Wouter Koolen and Steven De~Rooij.
\newblock Combining expert advice efficiently.
\newblock \emph{arXiv preprint arXiv:0802.2015}, 2008.

\bibitem[Krizhevsky(2009)]{Krizhevsky2009}
Alex Krizhevsky.
\newblock {Learning Multiple Layers of Features from Tiny Images}.
\newblock 2009.

\bibitem[Krizhevsky(2014)]{krizhevsky2014one}
Alex Krizhevsky.
\newblock One weird trick for parallelizing convolutional neural networks.
\newblock \emph{arXiv preprint arXiv:1404.5997}, 2014.

\bibitem[Kurita(2018)]{Kurita2018}
Keita Kurita.
\newblock {Learning Rate Tuning in Deep Learning: A Practical Guide | Machine
  Learning Explained}, 2018.
\newblock URL
  \url{http://mlexplained.com/2018/01/29/learning-rate-tuning-in-deep-learning-a-practical-guide/}.

\bibitem[LeCun et~al.(1990)LeCun, Denker, and Solla]{lecun1990}
Yann LeCun, John~S. Denker, and Sara~A. Solla.
\newblock Optimal brain damage.
\newblock In D.~S. Touretzky, editor, \emph{Advances in Neural Information
  Processing Systems 2}, pages 598--605. Morgan-Kaufmann, 1990.

\bibitem[LeCun et~al.(1998)LeCun, Bottou, Orr, and
  M{\"u}ller]{lecun1998efficient}
Yann LeCun, Leon Bottou, Genevieve~B Orr, and Klaus-Robert M{\"u}ller.
\newblock Efficient backprop.
\newblock In \emph{Neural Networks: Tricks of the Trade}, pages 9--50.
  Springer, 1998.

\bibitem[Li et~al.(2018)Li, Farkhoor, Liu, and Yosinski]{Li2018}
Chunyuan Li, Heerad Farkhoor, Rosanne Liu, and Jason Yosinski.
\newblock {Measuring the Intrinsic Dimension of Objective Landscapes}.
\newblock \emph{arXiv preprint arXiv:1804.08838}, apr 2018.

\bibitem[Li et~al.(2017)Li, Jamieson, DeSalvo, Rostamizadeh, and
  Talwalkar]{li2017hyperband}
Lisha Li, Kevin Jamieson, Giulia DeSalvo, Afshin Rostamizadeh, and Ameet
  Talwalkar.
\newblock Hyperband: A novel bandit-based approach to hyperparameter
  optimization.
\newblock \emph{The Journal of Machine Learning Research}, 18\penalty0
  (1):\penalty0 6765--6816, 2017.

\bibitem[Lillicrap et~al.(2015)Lillicrap, Hunt, Pritzel, Heess, Erez, Tassa,
  Silver, and Wierstra]{ddpg}
Timothy~P. Lillicrap, Jonathan~J. Hunt, Alexander Pritzel, Nicolas Heess, Tom
  Erez, Yuval Tassa, David Silver, and Daan Wierstra.
\newblock Continuous control with deep reinforcement learning.
\newblock \emph{CoRR}, abs/1509.02971, 2015.
\newblock URL \url{http://arxiv.org/abs/1509.02971}.

\bibitem[Liu et~al.(2017)Liu, Zoph, Shlens, Hua, Li, Fei-Fei, Yuille, Huang,
  and Murphy]{liu2017progressive}
Chenxi Liu, Barret Zoph, Jonathon Shlens, Wei Hua, Li-Jia Li, Li~Fei-Fei, Alan
  Yuille, Jonathan Huang, and Kevin Murphy.
\newblock Progressive neural architecture search.
\newblock \emph{arXiv preprint arXiv:1712.00559}, 2017.

\bibitem[Liu et~al.(2018)Liu, Simonyan, and Yang]{liu2018darts}
Hanxiao Liu, Karen Simonyan, and Yiming Yang.
\newblock Darts: Differentiable architecture search.
\newblock \emph{arXiv preprint arXiv:1806.09055}, 2018.

\bibitem[Mack(2016)]{Mack2016}
David Mack.
\newblock {How to pick the best learning rate for your machine learning
  project}, 2016.
\newblock URL \url{https://medium.freecodecamp.org/how-to
  -pick-the-best-learning-rate-for-your-machine-learning-project-9c28865039a8}.

\bibitem[Maclaurin et~al.(2015)Maclaurin, Duvenaud, and
  Adams]{maclaurin2015gradient}
Dougal Maclaurin, David Duvenaud, and Ryan Adams.
\newblock Gradient-based hyperparameter optimization through reversible
  learning.
\newblock In \emph{International Conference on Machine Learning}, pages
  2113--2122, 2015.

\bibitem[Mahmood et~al.(2012)Mahmood, Sutton, Degris, and
  Pilarski]{mahmood2012tuning}
Ashique~Rupam Mahmood, Richard~S Sutton, Thomas Degris, and Patrick~M Pilarski.
\newblock Tuning-free step-size adaptation.
\newblock In \emph{Acoustics, Speech and Signal Processing (ICASSP), 2012 IEEE
  International Conference on}, pages 2121--2124. IEEE, 2012.

\bibitem[Marcus et~al.(1993)Marcus, Marcinkiewicz, and Santorini]{Marcus1993}
Mitchell~P. Marcus, Mary~Ann Marcinkiewicz, and Beatrice Santorini.
\newblock Building a large annotated corpus of english: The penn treebank.
\newblock \emph{Comput. Linguist.}, 19\penalty0 (2):\penalty0 313--330, June
  1993.
\newblock ISSN 0891-2017.

\bibitem[Mass{\'e} and Ollivier(2015)]{masse2015speed}
Pierre-Yves Mass{\'e} and Yann Ollivier.
\newblock Speed learning on the fly.
\newblock \emph{arXiv preprint arXiv:1511.02540}, 2015.

\bibitem[Mnih et~al.(2015)Mnih, Kavukcuoglu, Silver, Rusu, Veness, Bellemare,
  Graves, Riedmiller, Fidjeland, Ostrovski, et~al.]{dqn}
Volodymyr Mnih, Koray Kavukcuoglu, David Silver, Andrei~A Rusu, Joel Veness,
  Marc~G Bellemare, Alex Graves, Martin Riedmiller, Andreas~K Fidjeland, Georg
  Ostrovski, et~al.
\newblock Human-level control through deep reinforcement learning.
\newblock \emph{Nature}, 518\penalty0 (7540):\penalty0 529, 2015.

\bibitem[Orabona and Tommasi(2017)]{orabona2017training}
Francesco Orabona and Tatiana Tommasi.
\newblock Training deep networks without learning rates through coin betting.
\newblock In \emph{Advances in Neural Information Processing Systems}, pages
  2160--2170, 2017.

\bibitem[Paszke et~al.(2017)Paszke, Gross, Chintala, Chanan, Yang, DeVito, Lin,
  Desmaison, Antiga, and Lerer]{paszke2017automatic}
Adam Paszke, Sam Gross, Soumith Chintala, Gregory Chanan, Edward Yang, Zachary
  DeVito, Zeming Lin, Alban Desmaison, Luca Antiga, and Adam Lerer.
\newblock Automatic differentiation in pytorch.
\newblock In \emph{NIPS-W}, 2017.

\bibitem[Real et~al.(2017)Real, Moore, Selle, Saxena, Suematsu, Tan, Le, and
  Kurakin]{real2017large}
Esteban Real, Sherry Moore, Andrew Selle, Saurabh Saxena, Yutaka~Leon Suematsu,
  Jie Tan, Quoc Le, and Alex Kurakin.
\newblock Large-scale evolution of image classifiers.
\newblock \emph{arXiv preprint arXiv:1703.01041}, 2017.

\bibitem[Schaul et~al.(2013)Schaul, Zhang, and LeCun]{pesky}
Tom Schaul, Sixin Zhang, and Yann LeCun.
\newblock No more pesky learning rates.
\newblock In \emph{International Conference on Machine Learning}, pages
  343--351, 2013.

\bibitem[Schraudolph(1999)]{schraudolph1999local}
Nicol~N Schraudolph.
\newblock Local gain adaptation in stochastic gradient descent.
\newblock 1999.

\bibitem[See et~al.(2016)See, Luong, and Manning]{See}
Abigail See, Minh-Thang Luong, and Christopher~D Manning.
\newblock {Compression of Neural Machine Translation Models via Pruning}.
\newblock \emph{arXiv preprint arXiv:1606.09274}, 2016.

\bibitem[Simonyan and Zisserman(2014)]{Simonyan14c}
K.~Simonyan and A.~Zisserman.
\newblock Very deep convolutional networks for large-scale image recognition.
\newblock \emph{CoRR}, abs/1409.1556, 2014.

\bibitem[Stanley and Miikkulainen(2002)]{stanley2002evolving}
Kenneth~O Stanley and Risto Miikkulainen.
\newblock Evolving neural networks through augmenting topologies.
\newblock \emph{Evolutionary computation}, 10\penalty0 (2):\penalty0 99--127,
  2002.

\bibitem[Surmenok(2017)]{Surmenok2017}
Pavel Surmenok.
\newblock {Estimating an Optimal Learning Rate For a Deep Neural Network},
  2017.
\newblock URL
  \url{https://towardsdatascience.com/estimating-optimal-learning-rate-for-a-deep-neural-network-ce32f2556ce0}.

\bibitem[Szegedy et~al.(2015)Szegedy, Liu, Jia, Sermanet, Reed, Anguelov,
  Erhan, Vanhoucke, and Rabinovich]{szegedy2015going}
Christian Szegedy, Wei Liu, Yangqing Jia, Pierre Sermanet, Scott Reed, Dragomir
  Anguelov, Dumitru Erhan, Vincent Vanhoucke, and Andrew Rabinovich.
\newblock Going deeper with convolutions.
\newblock In \emph{Proceedings of the IEEE conference on computer vision and
  pattern recognition}, pages 1--9, 2015.

\bibitem[Theodoridis(2015)]{theodoridis2015machine}
Sergios Theodoridis.
\newblock \emph{Machine learning: a Bayesian and optimization perspective}.
\newblock Academic Press, 2015.

\bibitem[Tibshirani and Marchetti-Bowick(2013)]{Tibshirani2013}
Ryan Tibshirani and Micol Marchetti-Bowick.
\newblock Gradient descent: Convergence analysis, 2013.
\newblock URL
  \url{http://www.stat.cmu.edu/~ryantibs/convexopt-F13/scribes/lec6.pdf}.

\bibitem[Tieleman and Hinton(2012)]{tieleman2012lecture}
Tijmen Tieleman and Geoffrey Hinton.
\newblock Lecture 6.5-rmsprop: Divide the gradient by a running average of its
  recent magnitude.
\newblock \emph{COURSERA: Neural networks for machine learning}, 4\penalty0
  (2):\penalty0 26--31, 2012.

\bibitem[{Van Erven} et~al.(2008){Van Erven}, Rooij, and
  Gr\"{u}nwald]{NIPS2007_3277}
Tim {Van Erven}, Steven~D. Rooij, and Peter Gr\"{u}nwald.
\newblock Catching up faster in {B}ayesian model selection and model averaging.
\newblock In J.~C. Platt, D.~Koller, Y.~Singer, and S.~T. Roweis, editors,
  \emph{Advances in Neural Information Processing Systems 20}, pages 417--424.
  Curran Associates, Inc., 2008.

\bibitem[{Van Erven} et~al.(2012){Van Erven}, Gr{\"{u}}nwald, and {De
  Rooij}]{VanErven2011}
Tim {Van Erven}, Peter Gr{\"{u}}nwald, and Steven {De Rooij}.
\newblock Catching up faster by switching sooner: A predictive approach to
  adaptive estimation with an application to the {AIC-BIC} dilemma.
\newblock \emph{Journal of the Royal Statistical Society: Series B (Statistical
  Methodology)}, 74\penalty0 (3):\penalty0 361--417, 2012.

\bibitem[Volf and Willems(1998)]{volf1998switching}
Paul~AJ Volf and Frans~MJ Willems.
\newblock Switching between two universal source coding algorithms.
\newblock In \emph{Data Compression Conference, 1998. DCC'98. Proceedings},
  pages 491--500. IEEE, 1998.

\bibitem[Wasserman(2000)]{Wasserman2000}
Larry Wasserman.
\newblock {Bayesian Model Selection and Model Averaging}.
\newblock \emph{Journal of Mathematical Psychology}, 44, 2000.

\bibitem[Werbos(1990)]{werbos1990backpropagation}
Paul~J Werbos.
\newblock Backpropagation through time: what it does and how to do it.
\newblock \emph{Proceedings of the IEEE}, 78\penalty0 (10):\penalty0
  1550--1560, 1990.

\bibitem[Wilson et~al.(2017)Wilson, Roelofs, Stern, Srebro, and
  Recht]{wilson2017marginal}
Ashia~C Wilson, Rebecca Roelofs, Mitchell Stern, Nati Srebro, and Benjamin
  Recht.
\newblock The marginal value of adaptive gradient methods in machine learning.
\newblock In \emph{Advances in Neural Information Processing Systems}, pages
  4148--4158, 2017.

\bibitem[Zoph and Le(2016)]{zoph2016neural}
Barret Zoph and Quoc~V Le.
\newblock Neural architecture search with reinforcement learning.
\newblock \emph{arXiv preprint arXiv:1611.01578}, 2016.

\end{thebibliography}

\vfill

\pagebreak

~
\pagebreak

\appendix

\section{Model Averaging with the Switch}
\label{sec:switch}

As explained is Section~\ref{sec:our-method}, we use a model averaging
method on the classifiers of the output layer. We could have used the
Bayesian Model Averaging method \citep{Wasserman2000}. But one of its
main weaknesses is the \emph{catch-up phenomenon}
\citep{VanErven2011}: plain Bayesian posteriors are slow to react
when the relative performance of models changes over time. Typically,
for instance, some larger-dimensional models need
more training data to reach good performance: at the time they become
better than lower-dimensional models for predicting current data, their
Bayesian posterior is so bad that they are not used right away (their
posterior needs to ``catch up'' on their bad initial performance).
This leads to very conservative model averaging methods.

The solution from \citep{VanErven2011} against the catch-up
phenomenon is to \emph{switch} between models. It is based on previous
methods for
prediction with expert advice (see for instance
\citep{herbster1998tracking,volf1998switching} and the references in
\citep{koolen2008combining,VanErven2011}), and is well rooted in
information theory.
The switch method maintains a
Bayesian posterior distribution, not over the set of models, but over the
set of \emph{switching strategies} between models. Intuitively, the model
selected can be adapted online to the number of samples seen.

We now give a quick overview of the switch method from
\cite{VanErven2011}: this is how the model
averaging weights $a_j$ are chosen in Alrao.

Assume that we have a set
of prediction strategies $\mathcal{M} = \{p^{j}, j \in
\mathcal{I}\}$. We define the set of \emph{switch sequences}, $\BS = \{((t_{1},
j_{1}), ..., (t_{L}, j_{L})), 1 = t_{1} < t_{2} < ... < t_{L}\; , \;  j
\in \mathcal{I}\}$.  Let $s = ((t_{1}, j_{1}), ..., (t_{L}, j_{L}))$ be a
switch sequence. The associated prediction strategy
$p_{s}(y_{1:n}|x_{1:n})$ uses model $p^{j_i}$ on the time interval
$[t_i;t_{i+1})$, namely \begin{align} \label{eq:defswitch}
p_{s}(y_{1:i+1}|x_{1:i+1},y_{1:i}) =
p^{K_{i}}(y_{i+1}|x_{1:i+1},y_{1:i}) \end{align} where $K_{i}$ is such
that $K_{i} = j_{l}$ for $t_{l} \leq i < t_{l+1}$.  We fix a prior
distribution $\pi$ over switching sequences. In this work, ${\cal I} = \{1, ..., N_C\}$ the prior is, for a switch sequence $s = ((t_{1}, j_{1}), ..., (t_{L}, j_{L}))$:
\begin{equation}
  \label{eq:priorswitch}
  \pi(s) = \pi_L(L)\pi_K(j_1)\prod_{i=2}^{L} \pi_T(t_{i}|t_i > t_{i-1})\pi_K(j_i)
\end{equation}
with $\pi_L(L) = \frac{\theta^L}{1-\theta}$ a geometric distribution over the switch sequences lengths, $\pi_K(j) = \frac{1}{N_C}$ the uniform distribution over the models (here the classifiers) and $\pi_T(t) = \frac{1}{t(t+1)}$.

This defines a Bayesian mixture distribution:
\begin{equation}
  \label{eq:2}
  p_{sw}(y_{1:T}|x_{1:T}) = \sum_{s\in\BS}\pi(s)p_{s}(y_{1:T}|x_{1:T})
\end{equation}
Then, the model averaging weight $a_{j}$ for the classifier $j$ after seeing $T$ samples is the posterior of the switch distribution: $\pi(K_{T+1}=j|y_{1:T}, x_{1:T})$.
\begin{align}
  a_{j} &= p_{sw}(K_{T+1}=j|y_{1:T}, x_{1:T}) \\
        &= \frac{p_{sw}(y_{1:T}, K_{T+1}=j | x_{1:T})}{p_{sw}(y_{1:T}| x_{1:T})}
\end{align}
These weights can be computed online exactly in a quick and simple way
\citep{VanErven2011}, thanks to dynamic programming methods from hidden
Markov models.

The implementation of the switch used in Alrao exactly
follows the pseudo-code from \citep{NIPS2007_3277}, with hyperparameter
$\theta = 0.999$ (allowing for many switches a priori). It can be found in the accompanying online code.


\section{Convergence result in a simple case}
\label{sec:convergence-results}

We prove a convergence result on Alrao in a simplified case: we assume
that the loss is convex, that the pre-classifier is fixed, that we work
with full batch gradients rather than stochastic gradient descent, and
that the Alrao model averaging method is standard Bayesian model
averaging. The convexity and fixed classifier assumptions cover, for instance, standard logistic regression: in
that case the Alrao output layer contains copies of a logistic classifier
with various learning rates, and the Alrao pre-classifier is the identity
(or any fixed linear pre-classifier).

For each Alrao classifier $j$, for simplicity we denote its parameters by
$\theta_j$ instead of $\theta_j^{\text{cl}}$ (there is no more ambiguity
since the pre-classifier is fixed).

The loss of some classifier $C$ on a dataset with features $(x_i)$ and labels
$(y_i)$ is $L(C) \deq \frac{1}{N}\sum_i \ell(C(x_i),
y_i)$, where for each input $x_i$, $(C(x_i)_y)_{y\in {\cal Y}}$ is a probability distribution
over the possible labels $y\in {\cal Y}$, and we use the log-loss
$\ell(C(x_i), y_i) \deq  -\log C(x_i)_{y_i}$.

For a classifier $C_\theta$ with parameter $\theta$, let us abbreviate
$L(\theta) \deq L(C_{\theta})$.  We assume that $L(\theta)$ is a
non-negative convex function, with
$\nabla^2L(\theta) \preceq \lambda I$ for all $\theta$.
Let $L^*$ be its global infimum; we assume $L^*$ is a minimum, reached at some point
$\theta^*$, namely $L(\theta^*)
= L^*$. Moreover we assume  that $L$ is locally strongly convex at its
minimum $\theta^*$:
$\nabla^2L(\theta^*) \succ 0$.

The Alrao architecture for such a classifier $C_\theta$ uses
$N_{\text{cl}}$ copies of
the same classifier, with different parameter values:
\begin{equation}
  \Phi^{\alrao}_{\theta_\alrao}(x) = \sum_{j=1}^{N_{\text{cl}}}a_j C_{\theta_j}
\end{equation}
where $\theta_\alrao \deq (\theta_1, ..., \theta_{N_{\text{cl}}})$, and
where the
$(a_j)_j$ are the weights given by the model averaging method. We
abbreviate $L(\theta_\alrao) \deq L(\Phi^{\alrao}_{\theta_\alrao})$.

The Alrao classification layer uses a set of learning rates
$(\eta_j)_{j\in J}$, and starting points $(\theta_j^{(0)})_{j\in J}$.
Using full-batch (non stochastic) Alrao updates we have
\begin{align}
  \theta_j^{(t+1)} &= \theta_j^{(t)} - \eta_j \nabla L(\theta_j^{(t)}) \\
  a^{(t+1)} &=  \mathtt{ModelAveraging}(a^{(t)}, (C_{\theta_{i}}(x_{1:N}))_{i}, y_{1:N})
\end{align}

We assume that the model averaging method is Bayesian Model Averaging.

We have assumed that the Hessian of the loss of the model satisfies
$\nabla^2L(\theta) \preceq \lambda I$. Under this condition, the standard
theory of gradient descent for convex functions requires that the
learning rate be less than $1/\lambda$, otherwise the gradient descent
might diverge. Therefore, for Alrao we assume that at least \emph{one} of
the learning rates considered by Alrao is below this threshold. 

\begin{theorem}
Assume that at least one of the Alrao learning rates $\eta_j$ satisfies
$\eta_j <1/\lambda$, with $\lambda$ as above. Then, under the assumptions
above, the Alrao loss is at most the optimal loss when $t\to\infty$:
  \begin{equation}
    \limsup_t L(\theta_\alrao^{(t)}) \leq L^*
  \end{equation}
\end{theorem}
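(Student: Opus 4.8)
The plan is to show that the Alrao mixture prediction converges, example by example, to the prediction of the single optimal classifier $C_{\theta^*}$, from which $L(\theta_\alrao^{(t)})\to L^*$ follows. This rests on two pillars: (i) the classifier $j^*$ with $\eta_{j^*}<1/\lambda$ converges to the global optimum fast enough that its cumulative excess loss stays bounded, and (ii) Bayesian model averaging keeps a non-vanishing weight on such well-converging classifiers while driving the weight of any non-converging classifier to zero. Throughout I write $S_j(t)\deq\sum_{s<t}L(\theta_j^{(s)})$ and $A_j(t)\deq\sum_{s<t}\bigl(L(\theta_j^{(s)})-L^*\bigr)\ge 0$, which is nondecreasing in $t$ since $L^*$ is the global minimum.

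First I would establish Pillar (i). Global convexity together with local strong convexity $\nabla^2 L(\theta^*)\succ 0$ forces $\theta^*$ to be the unique minimizer of $L$. For $j^*$, gradient descent with step $\eta_{j^*}<1/\lambda$ on a convex $\lambda$-smooth loss is non-expansive towards $\theta^*$ and satisfies $L(\theta_{j^*}^{(t)})\to L^*$, hence $\theta_{j^*}^{(t)}\to\theta^*$. Once the iterate enters the region of local strong convexity (modulus $\mu>0$), convergence becomes geometric, so $L(\theta_{j^*}^{(t)})-L^*=O(\rho^{t})$ for some $\rho<1$; this is the precise reason the strong-convexity hypothesis is needed, since it makes the series $A_{j^*}(\infty)=\sum_t(L(\theta_{j^*}^{(t)})-L^*)$ converge (the generic $O(1/t)$ convex rate would not suffice).

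Next I would analyze the averaging weights. For Bayesian model averaging the defining property is that the log--posterior ratio equals the prior ratio minus the cumulative log-loss difference, so $a_j^{(t)}\propto\pi_j\exp(-N\,S_j(t))$, giving $a_j^{(t)}\le(\pi_j/\pi_{j^*})\exp\bigl(-N(S_j(t)-S_{j^*}(t))\bigr)=(\pi_j/\pi_{j^*})\exp\bigl(NA_{j^*}(t)-NA_j(t)\bigr)$. Since $A_{j^*}(t)\le A_{j^*}(\infty)<\infty$, this yields $a_{j^*}^{(t)}$ bounded away from $0$ and, for every $j$, $a_j^{(t)}\le C\exp(-N A_j(t))$ with $C$ constant. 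I then split the classifiers into a ``good'' set $J_{\mathrm{good}}=\{j:A_j(\infty)<\infty\}$ (which contains $j^*$) and a ``bad'' set $J_{\mathrm{bad}}=\{j:A_j(\infty)=\infty\}$. For $j\in J_{\mathrm{bad}}$ the bound forces $a_j^{(t)}\to 0$, so $\sum_{j\in J_{\mathrm{good}}}a_j^{(t)}\to 1$; for $j\in J_{\mathrm{good}}$ we have $L(\theta_j^{(t)})\to L^*$ and, arguing as for $j^*$ via uniqueness of the minimizer and boundedness of the iterates, $\theta_j^{(t)}\to\theta^*$.

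Finally I would combine the pieces. Writing the mixture output at example $x_i$ as $\Phi_{\theta_\alrao}^{\alrao}(x_i)_{y_i}=\sum_{j\in J_{\mathrm{good}}}a_j^{(t)}C_{\theta_j^{(t)}}(x_i)_{y_i}+\sum_{j\in J_{\mathrm{bad}}}a_j^{(t)}C_{\theta_j^{(t)}}(x_i)_{y_i}$, the second sum is bounded by $\sum_{j\in J_{\mathrm{bad}}}a_j^{(t)}\to 0$, while in the first sum each $C_{\theta_j^{(t)}}(x_i)_{y_i}\to C_{\theta^*}(x_i)_{y_i}$ and the weights sum to $1$, so the whole expression tends to $C_{\theta^*}(x_i)_{y_i}$, which is strictly positive by the softmax. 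Continuity of $-\log$ then gives $\ell(\Phi^{\alrao}(x_i),y_i)\to\ell(C_{\theta^*}(x_i),y_i)$ for each $i$, and averaging over the finite dataset yields $L(\theta_\alrao^{(t)})\to L^*$, which is stronger than the claimed $\limsup$. The main obstacle I anticipate is controlling the weights of non-converging classifiers rigorously: one must rule out the degenerate case where a classifier neither stays near $\theta^*$ nor has diverging cumulative excess loss (e.g.\ oscillation at the stability threshold $\eta_j\approx 2/\lambda$), and confirm that the $O(\exp(-NA_j(t)))$ decay of $a_j^{(t)}$ really dominates any growth of $C_{\theta_j^{(t)}}$ or $L(\theta_j^{(t)})$; the summability of $A_{j^*}$ coming from local strong convexity is the linchpin that makes $a_{j^*}^{(t)}$---and hence the good set's total weight---stay bounded below.
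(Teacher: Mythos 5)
Your proposal is correct and its first two pillars coincide with the paper's proof: the same dichotomy between classifiers with summable excess loss and the rest, the same use of local strong convexity to upgrade the convex-smooth convergence of the classifier with $\eta_{j^*}<1/\lambda$ to a geometric rate so that $\sum_t\bigl(L(\theta_{j^*}^{(t)})-L^*\bigr)<\infty$, and the same bound $a_j^{(t)}\leq \exp\bigl(-N(S_j(t)-S_{j^*}(t))\bigr)$ forcing the weights of the non-summable classifiers to zero. Where you diverge is the concluding step. The paper drops the ``bad'' terms inside the $-\log$ (legitimate since $-\log$ is decreasing), renormalizes the weights over the good set, and applies Jensen's inequality to get $L(\theta_{\mathrm{Alrao}}^{(t)})\leq \sum_{j\in A}\tilde a_j^{(t)}L(\theta_j^{(t)})+o(1)=L^*+o(1)$; this needs only $L(\theta_j^{(t)})\to L^*$ for $j\in A$. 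You instead prove pointwise convergence of the mixture's predictions to those of $C_{\theta^*}$, which yields the stronger conclusion $L(\theta_{\mathrm{Alrao}}^{(t)})\to L^*$ but requires the extra claim that $\theta_j^{(t)}\to\theta^*$ for \emph{every} good $j$, not just $j^*$. That claim is fillable --- uniqueness of the minimizer (from convexity plus $\nabla^2L(\theta^*)\succ 0$) implies all sublevel sets of the convex $L$ are bounded and shrink to $\{\theta^*\}$, so $L(\theta_j^{(t)})\to L^*$ does force $\theta_j^{(t)}\to\theta^*$ --- but you should spell it out rather than gesture at ``boundedness of the iterates,'' which is not among the hypotheses; the Jensen route avoids this entirely. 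Your worry about a degenerate third class of classifiers is unfounded: the good/bad dichotomy is exhaustive by definition, and the bad classifiers' contribution to the mixture probability is bounded by $\sum_{j\in J_{\mathrm{bad}}}a_j^{(t)}$ simply because each $C_{\theta_j^{(t)}}(x_i)_{y_i}\leq 1$, so no domination argument is needed.
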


\begin{proof}
Let us
analyze the dynamics of the different models in the model averaging method.
Let us split the set of Alrao classifiers in two categories according to
whether their sum of errors is finite or infinite, namely,
\begin{align}
A&\deq \left\{j \in J \text{ such that } \sum_{t\geq
0}\left(L(\theta_j^{(t)})-L^*\right) <
\infty\right\},
\\B &\deq \left\{j \in J \text{ such that } \sum_{t\geq
0}\left(L(\theta_j^{(t)})-L^* \right)=
\infty\right\}
\end{align}
and in particular, for any $j\in A$, $\lim_tL(\theta_j^{(t)}) = L^*$.

The proof is organized as follows: We first show that $A$ is not empty.
Then, we show that $\lim_{t \rightarrow
\infty}a_j^{(t)} = 0$ for all $j \in B$: these models are eliminated by
the model averaging method. Then we will be able to conclude.

  \bigskip

First, we show that $A$ is not empty: namely, that there is
least one $j$ such that $\sum_{t\geq 0}(L(\theta_j^{(t)})-L^*) < \infty$.
We know that there is $j$ such that $\eta_j < \frac{2}{\lambda}$. Hence,
the standard theory of gradient descent for convex functions shows that
this particular classifier converges (e.g., \citep{Tibshirani2013}),
namely, the loss
$(L(\theta_j^{(t)}))_t$ converges to $L^*$. Moreover, since
$L$ is localy stricly convex around $\theta^*$, this implies that
$\lim_t \theta_j^{(t)} = \theta^*$.

We now show that the sum of errors for this specific $j$ converges. We
assumed that $L(\theta)$ is locally strongly convex in $\theta^*$. Let
$\mu >0$ such that $\nabla^2L(\theta^*) \succeq \mu I$. Since $L$ is
$C^2$, there is $\varepsilon '$ such that for any $\theta$ such that
$\|\theta - \theta^*\|\leq \varepsilon'$, then $\nabla^2L(\theta) \succeq
\frac{\mu}{2}I$. Let $\tau \in \BN$ such that $\|\theta_j^{(\tau)} -
\theta^*\| < \varepsilon '$. Then, from the theory of gradient descent
for strongly convex functions \citep{Tibshirani2013}, we know there is
some $\gamma < 1$ such that for $t > \tau$, $L(\theta_j^{(t)}) - L^* \leq C \|\theta_j^{(\tau)} - \theta^*\|\gamma^t$. We have:
  \begin{align}
    \sum_{s=1}^t &\left(L(\theta_j^{(s)})-L^*\right) = \\
    &= \sum_{s=1}^\tau
    \left(L(\theta_j^{(s)})-L^*\right) + \sum_{s=\tau}^t
    \left(L(\theta_j^{(s)})-L^*\right)\\
                                       &\leq  \sum_{s=1}^\tau
				       \left(L(\theta_j^{(s)})-L^*\right) + C \|\theta_j^{(\tau)} - \theta^*\|\gamma^\tau\frac{1}{1-\gamma}
  \end{align}
  Thus $\sum_{t \geq 0} \left(L(\theta_j^{(t)})-L^*\right) < \infty$.
  Therefore, $A$ is not empty.

\medskip
We now show that the weights $a_j^{(t)}$ tend to $0$ for any $j \in B$,
namely, $\lim_{t\rightarrow\infty}a_j^{(t)} = 0$.
Let $j\in B$ and take some $i\in A$.  In Bayesian model averaging, the weights are
\begin{align}
    a_j^{(t)} {}&= \frac{\prod_{s=1}^t
    p_{\theta_j^{(s)}}(y_{1:N}|x_{1:N})}{\sum_k \prod_{s=1}^t p_{\theta_k^{(s)}}(y_{1:N}|x_{1:N})}\\
    {}&\leq \prod_{s=1}^t\frac{p_{\theta_j^{(s)}}(y_{1:N}|x_{1:N})}{p_{\theta_i^{(s)}}(y_{1:N}|x_{1:N})}\\
              {}&= \prod_{s=1}^t\frac{\exp(-NL(\theta_j^{(s)}))}{\exp(-NL(\theta_i^{(s)}))}\\
  \begin{split}
    {}&= \exp \Big(- N\sum_{s=1}^t \left(L(\theta_j^{(s)}) - L^*\right) +\\
      & N\sum_{s=1}^t \left(L(\theta_i^{(s)}) - L^*\right)\Big)
  \end{split}
  \end{align}
Since $i\in A$ and $j \in B$, by definition of $A$ and $B$ this tends to
$0$. Therefore,
    $\lim_t a_j^{(t)} = 0$ for all $j\in B$.

\medskip
  We now prove the statement of the theorem. We have:
  \begin{align}
    \begin{split}
    L(\theta_\alrao^{(t)}) ={}& \frac{1}{N}\sum_i -\log \Big(\sum_{j\in
      A}a_je^{-\ell(C_{\theta_j^{(t)}}(x_i), y_i)} +\\
    &\sum_{j\in
    B}a_je^{-\ell(C_{\theta_j^{(t)}}(x_i), y_i)}\Big)  
    \end{split}\\
    {}&\leq \frac{1}{N}\sum_i -\log \left(\sum_{j\in
    A}a_je^{-\ell(C_{\theta_j^{(t)}}(x_i), y_i)}\right)
  \end{align}
  For all $i \in A$, set $\tilde a_i^{(t)} \deq \frac{a_i^{(t)}}{\sum_{j\in A} a_j^{(t)}} = \frac{a_i^{(t)}}{1 - \sum_{j\in B} a_j^{(t)}}$. Then
  \begin{align}
    \begin{split}
      L(\theta_\alrao^{(t)}) \leq{}& -\log\left( 1 - \sum_{j \in B}
      a_j^{(t)}\right) \\ &+ \frac{1}{N}\sum_i -\log \left(\sum_{j\in
      A}\tilde a_je^{-\ell(C_{\theta_j^{(t)}}(x_i), y_i)}\right)
    \end{split}\\
                             &\leq \frac{1}{N}\sum_i \sum_{j\in A}\tilde
			     a_j\ell(C_{\theta_j^{(t)}}(x_i), y_i) + o(1) \\
                             &= \sum_{j\in A}\tilde a_j L(\theta_j^{(t)}) +  o(1)\\
                               &= L^* + o(1)
    \end{align}
    thanks to Jensen's inequality for $-\log$, then because $\lim_t a_j^{(t)} =
    0$ for $j\in B$, and finally because $\lim_t L(\theta_j^{(t)}) = L^*$
    for $j\in A$. Taking the $\limsup$, we have:
    \begin{equation}
      \limsup_t L(\theta_\alrao^{(t)}) \leq L^*
    \end{equation}
which ends the proof.%
\end{proof}

\section{Evolution of the Posterior}
\label{sec:posterior}

The evolution of the model averaging weights can be observed during training. In Figure~\ref{fig:posterior}, we can see their evolution during the training of the GoogLeNet model with Alrao on CIFAR10, 10 classifiers, with $\eta_\min = 10^{-5}$ and $\eta_\max=10^1$.

We can make several observations. First, after only a few gradient
descent steps, the model averaging weights corresponding to the three
classifiers with the largest learning rates go to zero. This means that
their parameters are moving too fast, and their loss is getting very
large.

Next, for a short time, a classifier with a moderately large learning
rate gets the largest posterior weight, presumably because it is the
first to learn a useful model.

Finally, after the model has seen approximately 4,000 samples, a
classifier with a slightly smaller learning rate is assigned a posterior
weight $a_j$ close to 1, while all the others go to 0. This means that
after a number of gradient steps, the model averaging method acts like a
model selection method.

\begin{figure*}[h]
\centering
\includegraphics[width=\linewidth]{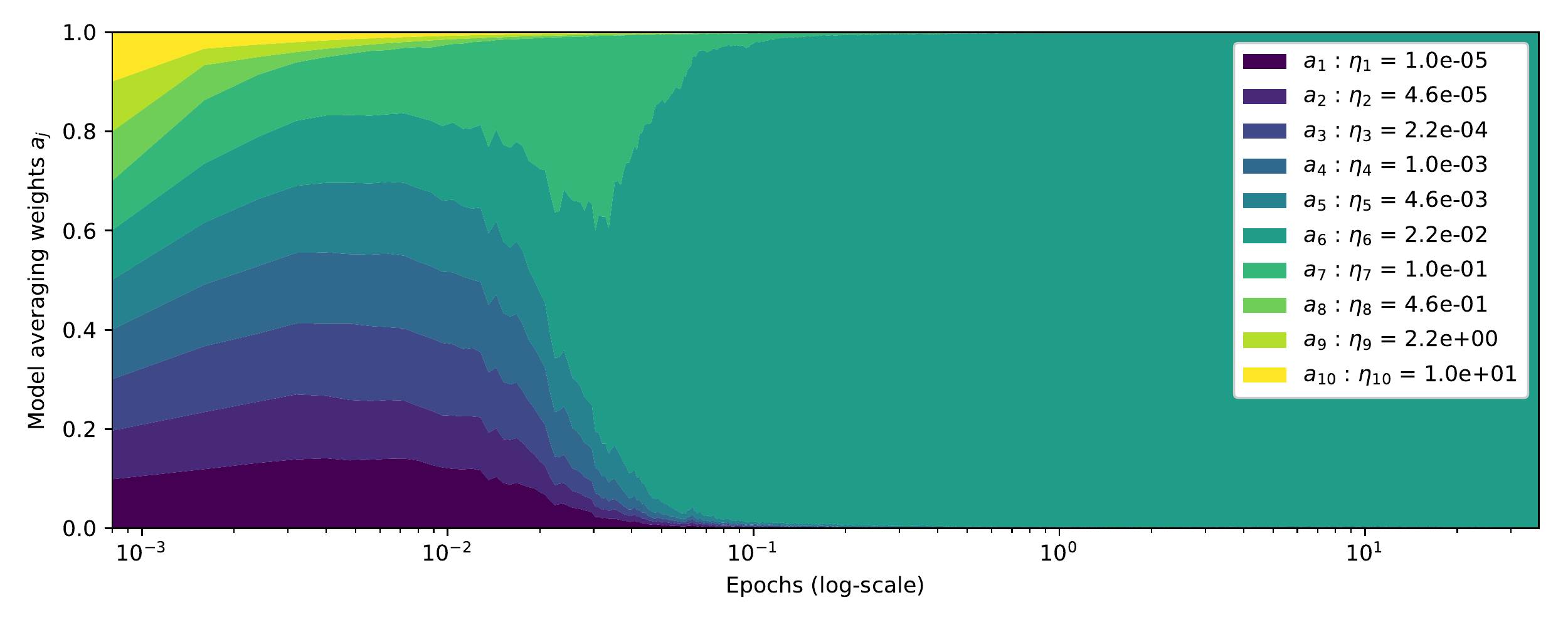}
\caption{Model averaging weights during training. During the training of the GoogLeNet model with Alrao on CIFAR10, 10 classifiers, with $\eta_\min = 10^{-5}$ and $\eta_\max=10^1$, we represent the evolution of the model averaging weights $a_j$, depending on the corresponding classifier's learning rate. }
\label{fig:posterior}
\end{figure*}

\section{Additional Experimental Details and Results}
\label{sec:preprocess}

In the case of CIFAR-10 and ImageNet, we normalize each input channel $x_i$ ($1 \leq i \leq 3$), using its mean and its standard deviation over the training set. Let $\mu_i$ and $\sigma_i$ be respectively the mean and the standard deviation of the $i$-th channel. Then each input $(x_1, x_2, x_3)$ is transformed into $(\frac{x_1 - \mu_1}{\sigma_1}, \frac{x_2 - \mu_2}{\sigma_2}, \frac{x_3 - \mu_3}{\sigma_3})$. This operation is done over all the data (training, validation and test).

Moreover, we use data augmentation: every time an image of the training set is sent as input of the NN, this image is randomly cropped and and randomly flipped horizontally. Cropping consists in filling with black a band at the top, bottom, left and right of the image. The size of this band is randomly chosen between 0 and 4 in our experiments.

On CIFAR10 and PTB, the batch size was 32 for every architecture. On ImageNet, the batch-size is 256 for Alexnet and ResNet50, and 128 for Densenet121.

On the Reinforcement Learning environments, we used vanilla Q-learning \cite{dqn} with a soft target update as in \cite{ddpg} $\tau = 0.9$, and a memory buffer of size 1,000,000. The architecture for the Q network is a MLP with $2$ hidden layers. The learning curves are in Fig. \ref{fig:rl-learningcurves}. For the optimisation, the switch was used with 10 output layers. An output layer is a linear layer. Since the switch is a probability model averaging method, we consider each output layer as a probabilistic model, defined as a Normal distribution with variance 1 and mean the predicted value by the output layer. The loss for the Alrao model is the negative log-likelihood of the model mixture.

\begin{figure*}[p]
  \centering
  \subfloat[GoogLeNet on CIFAR10 (Average on three runs)  \label{fig:firstepochs-googlenet}]{
    \includegraphics[width=\linewidth]{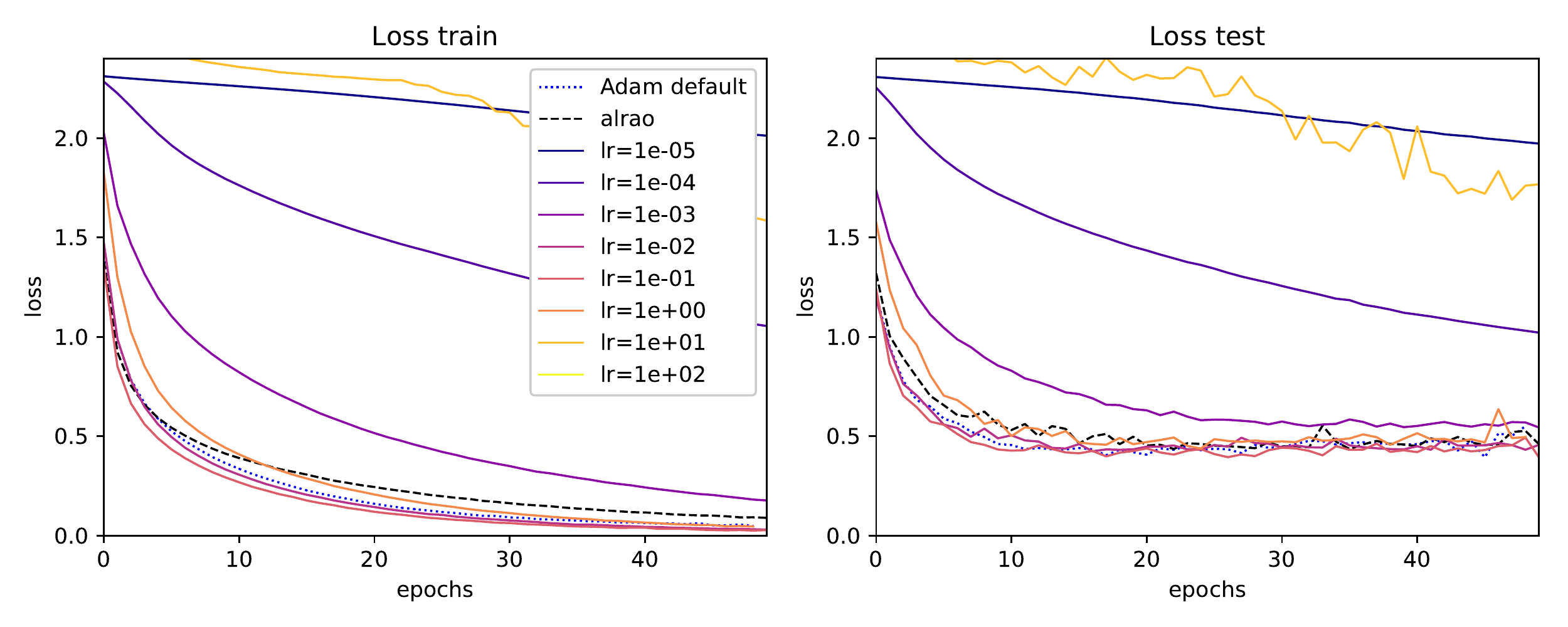}
    }
\vfill
\subfloat[Densenet121 trained on ImageNet  \label{fig:firstepochs-densenet}]{
  \includegraphics[width=\linewidth]{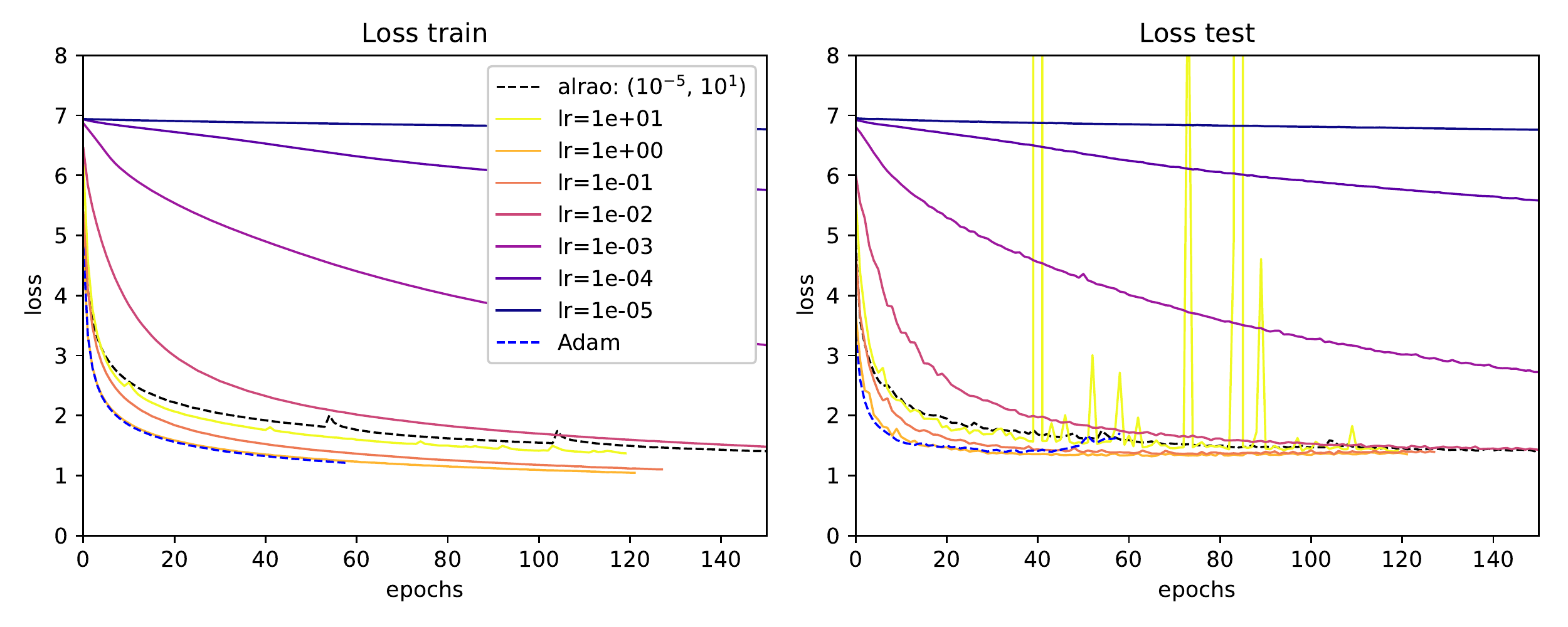}
  }
\vfill
\subfloat[MobileNetV2 on Cifar10 (average over 3 runs)\label{fig:firstepochs-mobilenet}]{
        \includegraphics[width=0.95\linewidth]{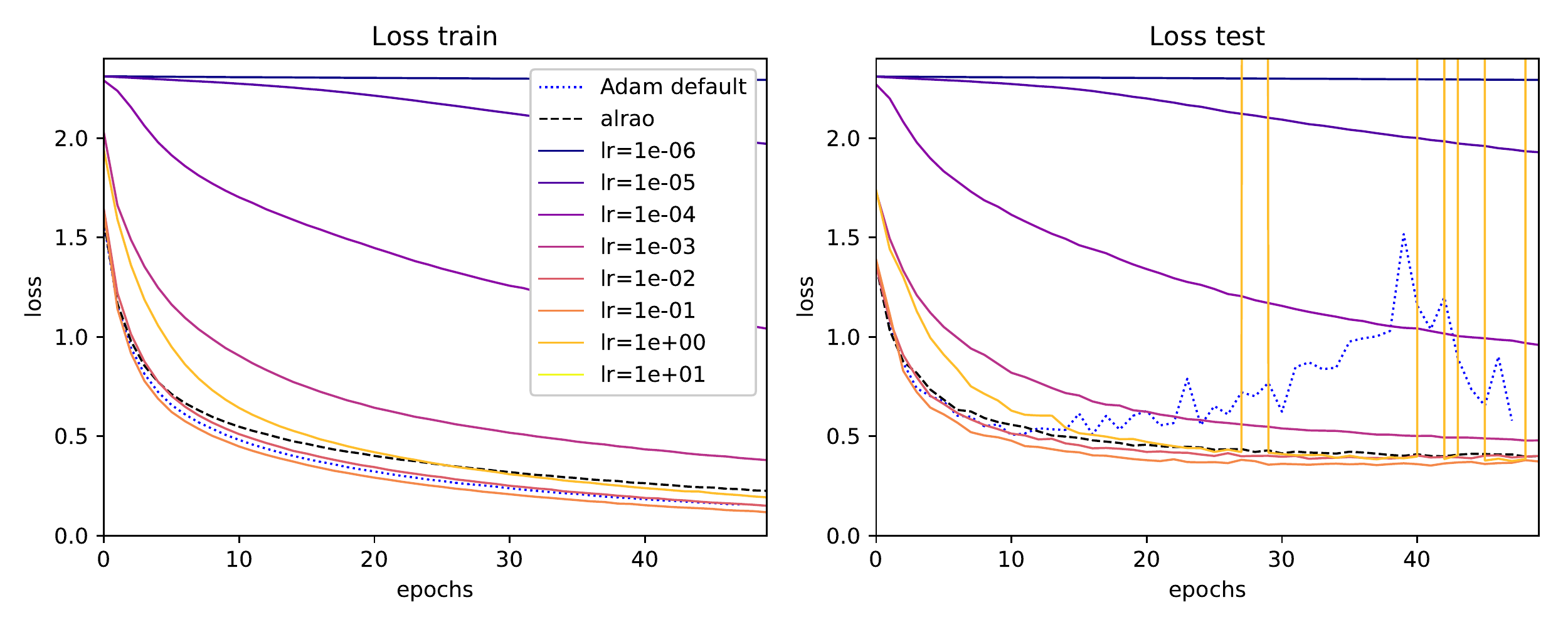}
      }
  \caption{Additional learning curves for SGD with various learning rates, Alrao, and Adam with its default setting, with the Densenet121 and Alexnet architectures on ImageNet and the GoogLeNet architecture on CIFAR10. Left: training loss; right: test loss.}
  \label{fig:adam-alrao}
\end{figure*}

\begin{figure}[h]
  \centering
  \includegraphics[width=\linewidth]{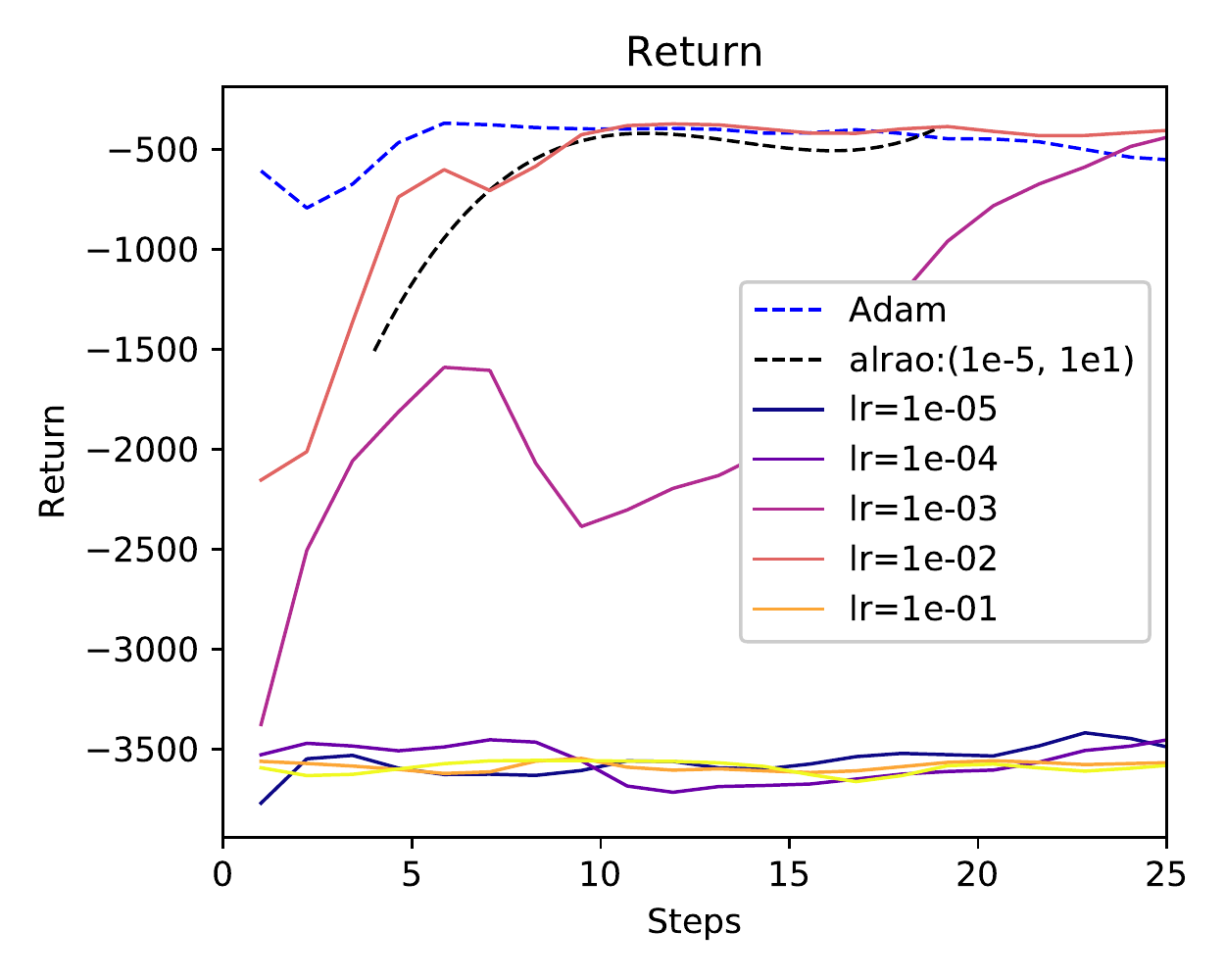}
  \caption{Learning curves for the Reinforcement Learning environment in the pendulum environment in Q-learning, for SGD with various learning rates, Alrao, and Adam with its default setting}
  \label{fig:rl-learningcurves}
\end{figure}

\section{Alrao with Adam}
\label{sec:alrao-adam}

In Figure~\ref{fig:adam-alrao}, we report our experiments with
Alrao-Adam on CIFAR10. As explained in Section~\ref{sec:strengths-weaknesses}, Alrao
is much less reliable with Adam than with SGD.

This is especially true for the test
performance, which can even diverge while training performance remains
either good or acceptable (Fig.~\ref{fig:adam-alrao}). Thus Alrao-Adam seems to send the model
into atypical regions of the search space.

We have no definitive explanation for this at present.
It might be that
changing Adam's learning rate requires changing its momentum parameters
accordingly.  It might be that Alrao does not work on Adam because Adam
is more sensitive to its hyperparameters. 


\begin{figure*}[p]
  \centering
    \subfloat[Alrao-Adam with GoogLeNet on CIFAR10: Alrao-Adam compared with standard Adam with various learning rates. Alrao uses 10 classifiers and learning rates in the interval $(10^{-6}, 1)$. Each plot is averaged on 10 experiments. We observe that optimization with Alrao-Adam is efficient, since train loss is comparable to the usual Adam methods. But the model starkly overfits, as the test loss diverges. \label{fig:googlenet-adamalrao}]{
    \includegraphics[width=0.8\textwidth]{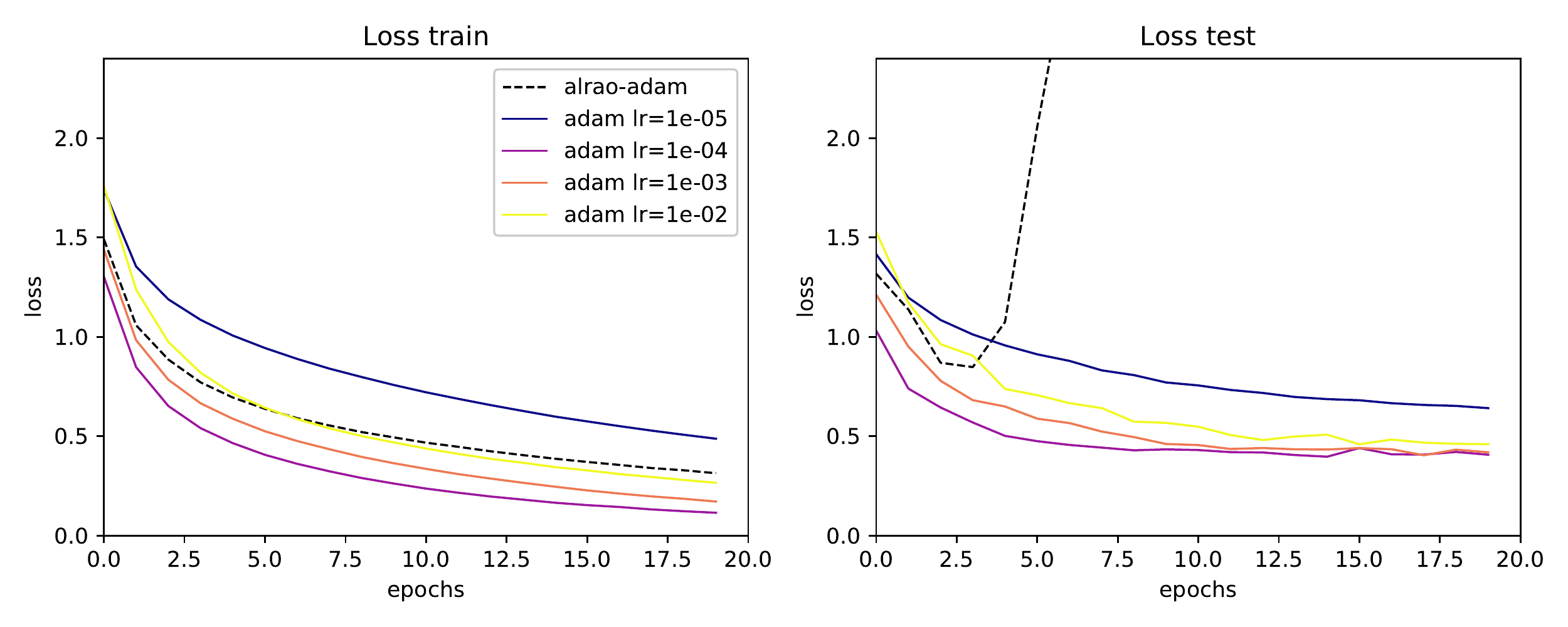}
  }
\vfill
\subfloat[Alrao-Adam with MobileNet on CIFAR10: Alrao-Adam with two different learning rate intervals, $(10^{-6}, 10^{-2})$ for the first one, $(10^{-6}, 10^{-1})$ for the second one, with 10 classifiers each. The first one is with $\eta_\min = 10^{-6}$. Each plot is averaged on 10 experiments. Exactly as with GoogLeNet model, optimization itself is efficient (for both intervals). For the interval with the smallest $\eta_\max$, the test loss does not converge and is very unstable. For the interval with the largest $\eta_\max$, the test loss diverges.\label{fig:mobilenet-adamalrao}]{
  \includegraphics[width=.8\textwidth]{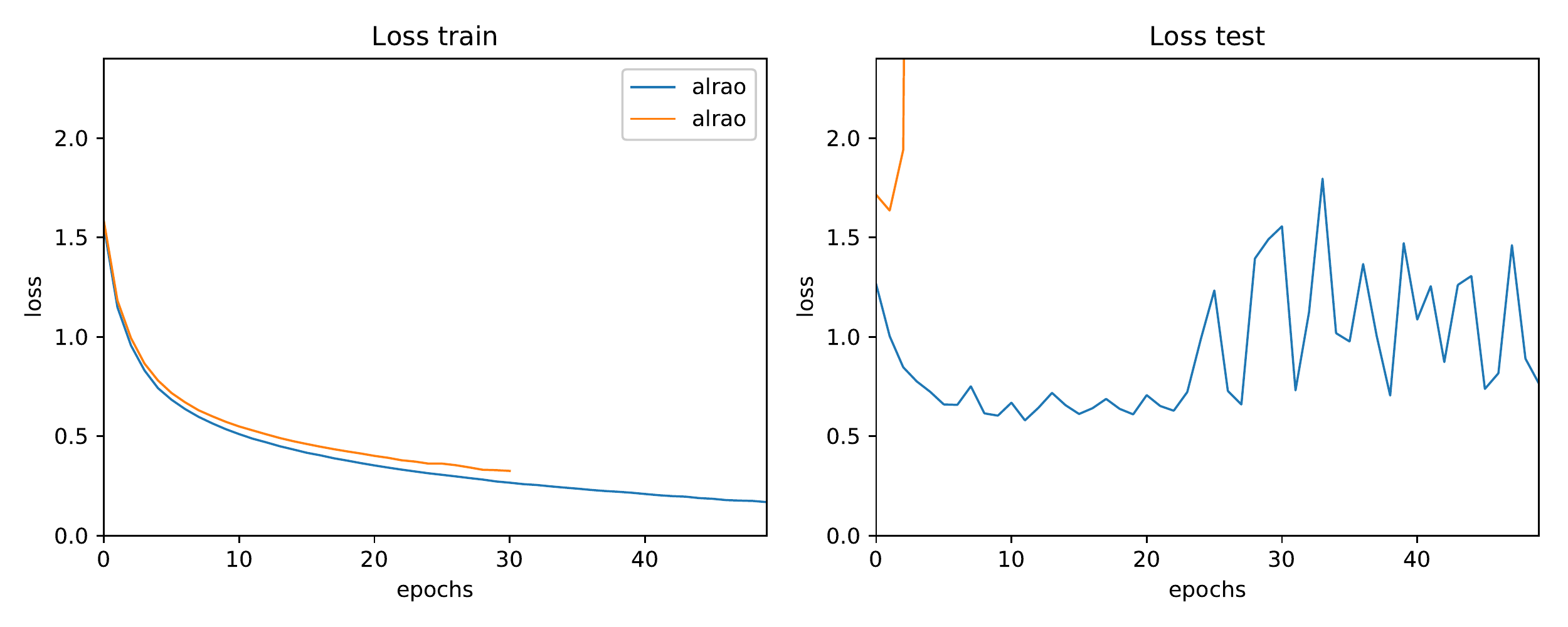}
}
  \vfill
  \subfloat[Alrao-Adam with VGG19 on CIFAR10:  Alrao-Adam on the interval $(10^{-6}, 1)$, with 10 classifiers. The 10 plots are 10 runs of the same experiments. While 9 of them do converge and generalize, the last one exhibits wide oscillations, both in train and test.  \label{fig:vgg-adamalrao}]{
    \includegraphics[width=.8\textwidth]{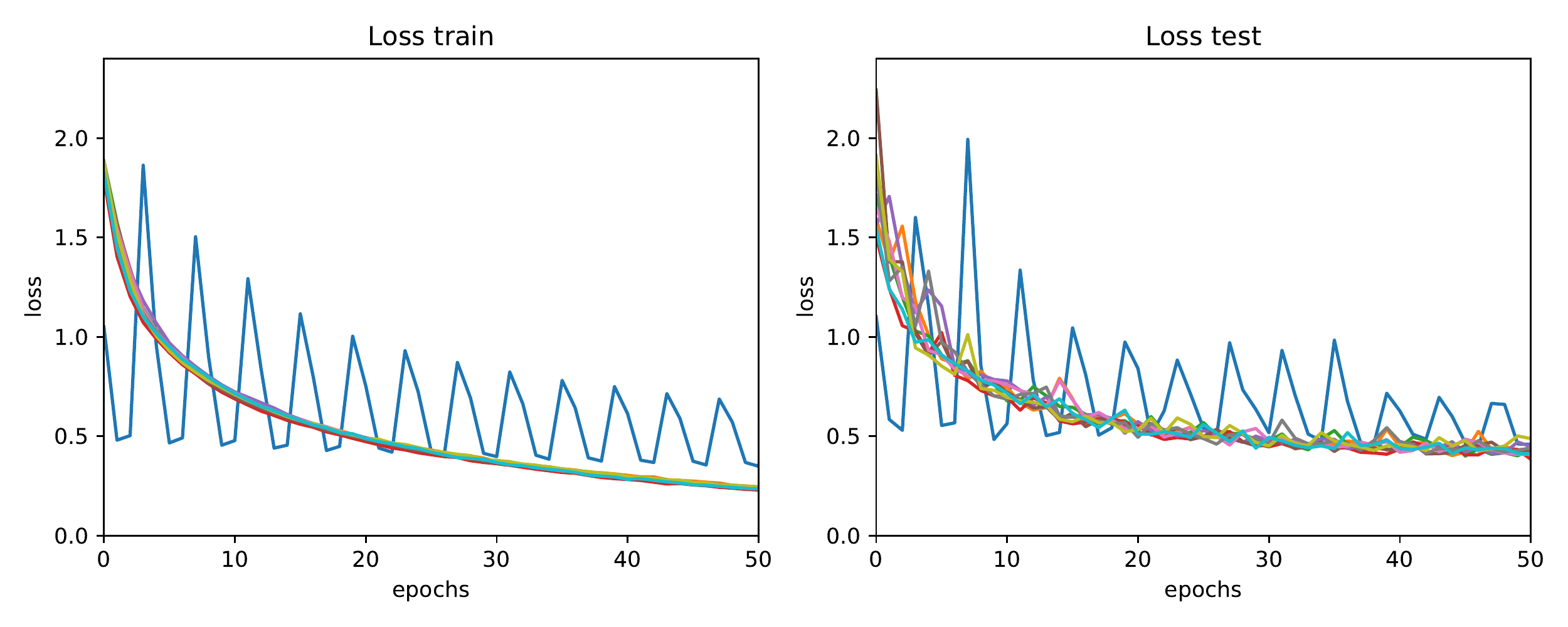}
    }
  \caption{Alrao-Adam: Experiments with the VGG19, GoogLeNet and MobileNet networks on CIFAR10.}
  \label{fig:adam-alrao}
\end{figure*}





\section{Number of Parameters}
\label{sec:number-parameters}

As explained in Section~\ref{sec:strengths-weaknesses}, Alrao increases the number of
parameters of a model, due to output layer copies. The additional number
of parameters is approximately equal to $(N_{\mathrm{cl}} - 1)\times K
\times d$ where $N_{\mathrm{cl}}$ is the number of classifier copies used in
Alrao, $d$ is the dimension of the output of the pre-classifier, and $K$
is the number of classes in the classification task (assuming a standard
softmax output; classification with many classes often uses other kinds
of output
parameterization instead).

\begin{table}[!h]
  \caption{Comparison between the number of parameters in models used without and with Alrao. LSTM (C) is a simple LSTM cell used for character prediction while LSTM (W) is the same cell used for word prediction.}
  \begin{center}
  \fontsize{9pt}{9pt}\selectfont
  \begin{sc}
  \begin{tabular}[h]{lcc}
    \toprule
    Model     & \multicolumn{2}{c}{Number of parameters} \\
    {}        & Without Alrao & With Alrao                \\
    \midrule
    GoogLeNet & 6.166M       & 6.258M                    \\
    VGG       & 20.041M      & 20.087M                   \\
    MobileNet & 2.297M       & 2.412M                    \\
    \midrule
    LSTM (C)  & 0.172M       & 0.197M                    \\
    LSTM (W)  & 2.171M       & 7.221M                    \\
    \bottomrule
  \end{tabular}
  \end{sc}
  \end{center}

  \vskip -0.1in
  \label{tab:nparams}
\end{table}

The number of parameters for the models used, with and without Alrao, are
in Table~\ref{tab:nparams}. We used 10 classifiers in Alrao for
convolutional neural networks, and 6 classifiers for LSTMs. Using Alrao for classification tasks with many classes,
such as word prediction (10,000 classes on PTB), increases the number of
parameters noticeably.

For those model with significant parameter increase, the various
classifier copies may be done on parallel GPUs.

\section{Frozen Features Do Not Hurt Training}
\label{sec:alrao-bernouilli}

\begin{figure*}[h]
  \centering
  \includegraphics[width=0.7\linewidth]{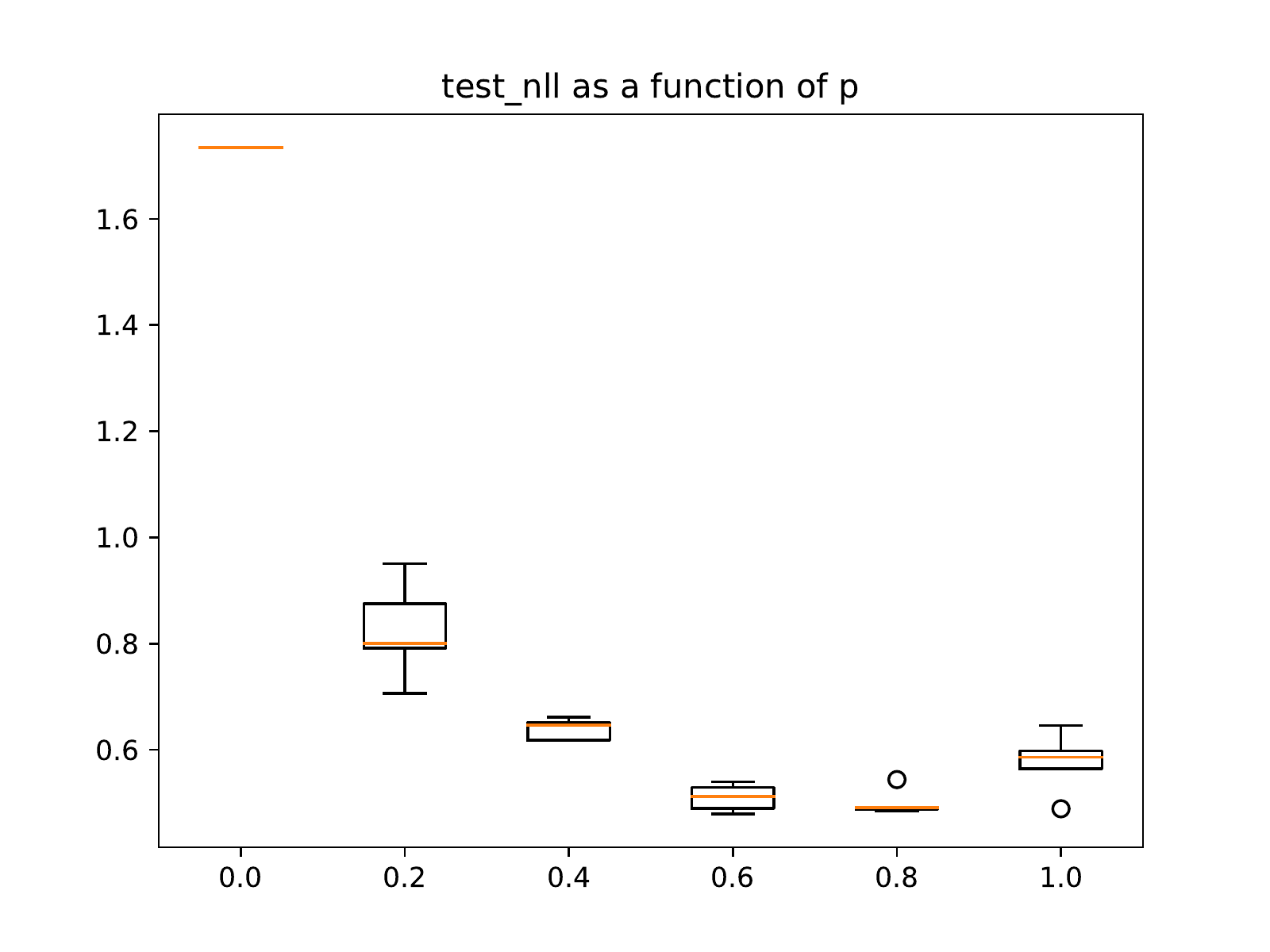}
  \caption{Loss of a model where only a random fraction $p$ of the features are trained, and the others left at their initial value, as a function of $p$. The architecture is GoogLeNet, trained on CIFAR10.}
  \label{fig:fraction-feature}
\end{figure*}

As explained in the introduction, several works support the idea that not
all units are useful when learning a deep learning model. Additional
results supporting this hypothesis are presented in
Figure~\ref{fig:fraction-feature}. We trained a GoogLeNet architecture on
CIFAR10 with standard SGD with learning rate $\eta_0$, but learned only a
random fraction $p$ of the features (chosen at startup), and kept the
others at their initial value. This is equivalent to sampling each learning
rate $\eta$ from the probability distribution $P(\eta = \eta_0) = p$ and
$P(\eta = 0) = 1 - p$.

We observe that even with a fraction of the weights not being learned,
the model's performance is close to its performance when fully trained.

When training a model with Alrao, many features might not learn at all,
due to too small learning rates. But Alrao is still able to reach good
results.  This could be explained by the resilience of
neural networks to partial training.

\section{Tutorial}
\label{sec:tutorial}

In this section, we briefly show how Alrao can be used in practice on an
already implemented method in Pytorch. The full code will be available
once the anonymity constraint is lifted.

The first step is to build the preclassifier. Here, we use the VGG19 architecture. The model is built without a classifier. Nothing else is required for Alrao at this step.

\begin{lstlisting}[language=python]
class VGG(nn.Module):
    def __init__(self, cfg):
        super(VGG, self).__init__()
        self.features = self._make_layers(cfg)
        # The dimension of the preclassier's output need to be specified.
        self.linearinputdim = 512

    def forward(self, x):
        out = self.features(x)
        out = out.view(out.size(0), -1)
        # The model do not contain a classifier layer.
        return out

    def _make_layers(self, cfg):
        layers = []
        in_channels = 3
        for x in cfg:
            if x == 'M':
                layers += [nn.MaxPool2d(kernel_size=2, stride=2)]
            else:
                layers += [nn.Conv2d(in_channels, x, kernel_size=3, padding=1),
                           nn.BatchNorm2d(x),
                           nn.ReLU(inplace=True)]
                in_channels = x
        layers += [nn.AvgPool2d(kernel_size=1, stride=1)]
        return nn.Sequential(*layers)

preclassifier = VGG([64, 64, 'M', 128, 128, 'M', 256, 256, 256, 256, 'M', \
        512, 512, 512, 512, 'M', 512, 512, 512, 512, 'M'])

\end{lstlisting}

Then, we can build the Alrao-model with this preclassifier, sample the learning rates for the model, and define the Alrao optimizer

\begin{lstlisting}[language=python]
# We define the interval in which the learning rates are sampled
minlr = 10 ** (-5)
maxlr = 10 ** 1

# nb_classifiers is the number of classifiers averaged by Alrao.
nb_classifiers = 10
nb_categories = 10

net = AlraoModel(preclassifier, nb_categories, preclassifier.linearinputdim, nb_classifiers)

# We spread the classifiers learning rates log-uniformly on the interval.
classifiers_lr = [np.exp(np.log(minlr) + \
    k /(nb_classifiers-1) * (np.log(maxlr) - np.log(minlr)) \
    ) for k in range(nb_classifiers)]

# We define the sampler for the preclassifier's features.
lr_sampler = lr_sampler_generic(minlr, maxlr)
lr_preclassifier = generator_randomlr_neurons(net.preclassifier, lr_sampler)

# We define the optimizer
optimizer = SGDAlrao(net.parameters_preclassifier(),
                      lr_preclassifier,
                      net.classifiers_parameters_list(),
                      classifiers_lr)
\end{lstlisting}

Finally, we can train the model. The only differences here with the usual training procedure is that each classifier needs to be updated as if it was alone, and that we need to update the model averaging weights, here the switch weights.

\begin{lstlisting}[language=python]
def train(epoch):
    for batch_idx, (inputs, targets) in enumerate(trainloader):
        # We update the model averaging weights in the optimizer
        optimizer.update_posterior(net.posterior())
        optimizer.zero_grad()

        # Forward pass of the Alrao model
        outputs = net(inputs)
        loss = nn.NLLLoss(outputs, targets)

        # We compute the gradient of all the model's weights
        loss.backward()

        # We reset all the classifiers gradients, and re-compute them with
        # as if their were the only output of the network.
        optimizer.classifiers_zero_grad()
        newx = net.last_x.detach()
        for classifier in net.classifiers():
            loss_classifier = criterion(classifier(newx), targets)
            loss_classifier.backward()

        # Then, we can run an update step of the gradient descent.
        optimizer.step()

        # Finally, we update the model averaging weights
        net.update_switch(targets, catch_up=False)
\end{lstlisting}

\end{document}